\journal{Journal of \LaTeX\ Templates}
\newtheorem{theorem}{Theorem}
\newtheorem{lemma}{Lemma}
\newtheorem{definition}{Definition}
\newtheorem{example}{Example}
\DeclareMathOperator{\support}{support}
\DeclareMathOperator{\Trim}{Trim}
\DeclareMathOperator{\Sequence}{Sequence} 
\DeclareMathOperator{\Convolve}{Convolve}
\DeclareMathOperator{\ch}{\operatorname{children}}
\newcolumntype{X}[1]{>{\RaggedRight\hspace{0pt}}p{#1}}
\begin{document}

\begin{frontmatter}

\title{Estimating the Probability of Meeting a \\ Deadline in Hierarchical Plans\footnote{A short version of this paper was presented in IJCAI 2015~\cite{cohen2015estimating}}.}

\author{Liat Cohen}
\author{Solomon Eyal Shimony}
\author{Gera Weiss}
\address{Computer Science Department\\
Ben Gurion University of The Negev\\
Beer-Sheva, Israel 84105\\}

\begin{abstract}
	Given a hierarchical plan (or schedule) with uncertain task times, we propose a deterministic polynomial (time and memory) algorithm for estimating the probability that its meets a deadline, or, alternately, that its {\em makespan} is less than a given duration. Approximation is needed as it is known that this problem is NP-hard even for sequential plans (just, a sum of random variables). In addition, we show two new complexity results: (1) Counting the number of events that do not cross deadline is \#P-hard; (2)~Computing the expected makespan of a hierarchical plan is NP-hard. For the proposed approximation algorithm, we establish formal approximation bounds and show that the time and memory complexities grow polynomially with the required accuracy, the number of nodes in the plan, and with the size of the support of the random variables that represent the durations of the primitive tasks. We examine these approximation bounds empirically and demonstrate, using task networks taken from the literature, how our scheme outperforms sampling techniques and exact computation in terms of accuracy and run-time. As the empirical data shows much better error bounds than guaranteed, we also suggest a method for tightening the bounds in some cases. 
\end{abstract}  

\begin{keyword}
deadline, makespan, random variables, hierarchical plan, approximation.

\end{keyword}

\end{frontmatter}


\section{Introduction}

Numerous planning tools produce plans that call for executing tasks non-linearly.
Usually, such plans are represented as a tree, where the
leaves indicate primitive tasks, and other nodes represent compound tasks consisting of executing their sub-tasks either in parallel (also called ``concurrent'' tasks~\cite{gabaldon2002programming}) or in sequence ~\cite{erol1994htn,nau1998control,nau2003shop2,kelly2008offline}.

Given such a hierarchical plan representation, it is frequently of interest to evaluate its desirability in terms of resource consumption, such as fuel, cost, or time. The answer to such questions can be used to decide which of a set of plans, all valid as far as achieving the goal(s) are concerned, is better given a user-specified utility function. Another reason to compute these distributions is to support runtime monitoring of resources, generating alerts to the execution software or human operator if resource consumption in practice has a high probability of surpassing a given threshold.

While most tools aim at good average performance of the plan, in which case one may ignore the 
full distribution and consider only the expected resource consumption~\cite{bonfietti2014disregarding}, our paper focuses
on providing guarantees for the probability of meeting deadlines. This type of analysis is needed, e.g., in
Service-Level-Agreements (SLA) where guarantees of the form: ``response time less than 1mSec in at least 95\% of the cases'' are common~\cite{buyya2011sla}, 
Section~\ref{sec:discussion} discusses additional related work.
We assume that a hierarchical plan is given in the form of a tree, with uncertain resource consumption of the primitive actions in the network, provided as a probability distribution. The problem is to compute a property of interest of the distribution for the entire task network. In this paper, we focus mainly on the issue of computing the probability $P(X_\tau \leq T)$ of satisfying a deadline $T$ where $X_\tau$ is a random variable describing the distribution of the makespan of the plan. Since in the above-mentioned applications for these computations, one needs results in real-time (for monitoring) or multiple such computations (in comparing candidate plans), efficient computation here is crucial, and is more important than in, e.g., off-line planning.

The decision problem we analyze is: given a task tree and a deadline, does the probability of meeting this deadline is above a given threshold. We show that this decision problem is NP-hard (see Section~\ref{sec:complexity}), the first contribution of this paper. We propose a deterministic polynomial-time approximation scheme for this problem, a second contribution of this paper. Error bounds are analyzed and are shown to be tight. For discrete random variables with finite support, finding the distribution of the maximum can be done in low-order polynomial time. However, when compounded with errors generated due to approximation in subtrees, handling this case requires careful analysis of the resulting error. The approximations developed for both sequence and parallel nodes are combined into an overall algorithm for task trees, with an analysis of the resulting error bounds, yielding a polynomial-time (additive error) approximation scheme for computing the probability of satisfying a deadline for the complete network, another contribution of this paper. 
We also consider computing {\em expected} makespan. Since for discrete random variables, in parallel nodes one can compute an exact distribution efficiently, it is easy to compute an {\em expected makespan} in this case as well as for sequence nodes. Despite that,  we show that for trees with both parallel and sequence nodes, computing the expected makespan is hard.


Experiments are provided in order to examine the quality of approximation in practice when compared to the theoretical error bounds. A simple sampling scheme is also provided as a yardstick, even though the sampling does not come with error guarantees, but only bounds in probability. Finally, we examine our results in light of related work in the fields of planning and scheduling, as well as
probabilistic reasoning. 

\section{Problem statement}\label{sec:formal}

We are given a hierarchical plan represented as a task tree consisting of three types of nodes: primitive actions as leaves, sequence nodes, and parallel nodes.
Primitive action nodes contain distributions over their resource consumption. Many resources can be modeled using the proposed approach. 
For example if the resource of interest is memory, tasks running in parallel use the sum of the memory space of each of the tasks; if they run in sequence, only the maximum thereof is needed. Note that the role parallel and sequence node is inversed here. We will assume henceforth, in order to be more concrete, that the resource of interest is time, i.e., that parallel nodes represent maximum and sequence nodes represent sum.


The tasks trees that we analyze are composed of sequence nodes, parallel nodes and leaves that we call primitive tasks. A sequence node represents a composition of tasks in sequence. Its makespan is the the sum of the makespans of its child nodes. A parallel node represents a composition of tasks in parallel. Its makespan is the maximum of the makespan of its child nodes.
The makespans of the primitive nodes at the leafs of the tree are uncertain, and described as probability distributions.
We assume  that the distributions are  independent (but {\em not} necessarily identically distributed).
We also assume initially that the random variables are discrete and have finite support (i.e. the number of values for which the probability is non-zero is finite).
In this paper, we use the term $P_X$ to denote the (discrete) probability distribution of a (discrete) random variable $X$, i.e., the list of the probabilities of the outcomes, also known as the probability mass function. We use the term $F_X$ to denote the cumulative distribution function, i.e., $F_X(x)$ is the probability that $X$ will take a value less than or equal to $x$. More specifically, as the resource of interest is completion time, we associate each leaf node, $v$, with the random variable, $X_v$, that represents the distribution of the 
a completion-time distribution $P_{X_v}$ and a cumulative distribution function (CDF) $F_{X_v}$.

For clarity, we use the following standard for notations in the paper. First, we denote all random variables by symbols of the form $X_{name}$ where in the subscript we put the name of the variable. 
Second, we use the symbol $\tau_{name}$ where the subscript contains a name of a node to denote the subtree starting at this node, i.e., the node is the root of the subtree. Note that $X_{\tau_{v}}$ and $X_v$ have the same meaning - a random variable representing the makespan of the subtree $\tau_{v}$.
Last, we use primed versions of random variables to denote approximations, i.e., the symbol $X'_{name}$ denotes an approximation of the random variable $X_{name}$.

The main computational problem analyzed in this paper is the {\em deadline problem}: 
\begin{definition}\label{Def:Deadline}
	Given a task tree $\tau $ and a deadline $T$, the \emph{deadline problem} is to compute $F_{X_\tau}(T)$.
\end{definition}

In words, given a task tree $\tau $ and a deadline $T$, we ask what is the probability that the plan modeled by $\tau$ terminates in time less than $T$?

The above {\em deadline problem} reflects a step utility function: a 
constant positive utility $U$ for all $t$ less than or equal to a deadline 
time $T$, and $0$ for all $t>T$. We also briefly consider a linear utility function, requiring
computation of the expected completion time of $\tau$, and show that this {\em expectation problem} 
is also NP-hard. 
The duration of a sequence node $s$ is a random variable:

$$
X_s = \sum_{c\in \ch(s)} X_c
$$
where $X_c$ is a random variable representing the duration of the plan modeled by the subtree rooted at the child node $c$ and $\ch(s)$ is the set of children of $s$.


Likewise, the duration distribution of a parallel node $p$ is a random variable:
$$
X_p = \max_{c\in \ch(p)} X_c
$$
Let $r$ be the root node of a task tree $\tau$ and $X_r$ be a random variable representing the duration distribution of the root.
Then the probability that the plan meets the deadline $T$ is $F_{X_r}(T)$.
Thus we need to compute the CDF, which is NP-hard~\cite{mohring2001scheduling}. We show how to deterministically approximate the CDF of the root with additive error at most $\varepsilon$ in time polynomial in $1/\varepsilon$.

Figure~\ref{fig:hierarchical plan} is a simple hierarchical plan example. The set of nodes $V$ represented by $\lbrace A,B,C,a,b,c,d,e \rbrace $ and the type of each task node implied by its shape, $A$ and $C$ are sequence nodes, $ B $ is a parallel node, and $ a, b, c, d, e $ are all primitive nodes. Every primitive node is associated with probability mass function (PMF) describes the completion-time distribution. In this case $P_{X_a}(x)=P_{X_b}(x)=P_{X_c}(x)=P_{X_d}(x)=P_{X_e}(x)$.
$$P_{X_a}(x)=
\begin{cases}
1/4 & \text{if } x=1; \\
3/4 & \text{if } x=4; \\
0 & otherwise.
\end{cases}$$
This tree gives execution instructions: run $a$ and $b$ in parallel, then run $c$ and $d$ in sequence and, when they finish, run $e$.

\begin{figure}
	\centering
	\begin{tikzpicture}[
	level/.style={sibling distance=30mm/#1},
	edge from parent/.style = {draw, -latex},
	sloped,
	every node/.style = {trapezium, trapezium left angle=60, trapezium right angle=-60, draw, align=center, top color=white, bottom color=blue!20,minimum height=0.5cm}
	]
	\node [single arrow] {$A$}  
	child { node {$B$} 
		child{ node [rectangle] {$a$}}
		child{ node [rectangle] {$b$}}
	}    
	child  { node [single arrow] {$C$} 
		child {node [rectangle] {$c$}}
		child {node [rectangle] {$d$}}
	}
	child {node [rectangle] {$e$}}
	;
	
	\end{tikzpicture}
	\caption{Example of a hierarchical plan and its graphical representation.}
	\label{fig:hierarchical plan}
\end{figure}

\begin{example}

Given a task tree $\tau$ as in figure ~\ref{fig:hierarchical plan}, we will compute the duration distribution in the form of PMF for each compound task node including the root node, and then we will return the probability for $\tau$ to satisfy a given deadline $T$. 
First, compute duration distribution for the node $B$. The children task nodes $a,b$, are executed in parallel, therefore we need to find the distribution over the maximum of nodes $a$ and $b$:  
$$P_{X_B}(x)=P_{\max(X_a,X_b)}(x)=
\begin{cases}
1/16 & \text{if } x=1; \\
15/16 & \text{if } x=4; \\
0 & otherwise.
\end{cases}$$

Second, compute duration distribution for node $C$. The children task nodes $c,d$, are executed in sequence, therefore, we need to use convolution in order to compute the sum of duration distributions of nodes $c$ and $d$:  
$$P_{X_C}(x)=P_{X_c+X_d}(x)=
\begin{cases}
1/16 & \text{if } x=2; \\
3/8 & \text{if } x=5; \\
9/16 & \text{if } x=8; \\
0 & otherwise.
\end{cases}$$

Last, compute duration distribution for node $A$, the root node. The children task nodes $B,C,e$ are executed in sequence, therefore, we need to use convolution in order to compute the sum of duration distributions of nodes $B$, $C$ and $e$: 
$$P_{X_{\tau}}=P_{X_A}(x)=P_{X_B+X_C+X_e}(x)=
\begin{cases}
1/1024 & \text{if } x=4; \\
3/128 & \text{if } x=7; \\
81/512 & \text{if } x=10; \\
27/64 & \text{if } x=13; \\
405/1024 & \text{if } x=16; \\
0 & otherwise.
\end{cases}$$

For every deadline $T$ we can easily return the probability for $\tau$ to satisfy $T$. If $T=8$, the probability is  $F_{X_{\tau}}(8)=25/1024$ . 
\end{example}
 
\section{Sum (sequence) nodes}\label{sec:seq}

The size of the support (number of non-zero probability values) of the sum of random variables
may be exponential in the number of variables, even for 2-valued variables.
In fact, as shown in~\cite{mohring2001scheduling}, computing the CDF of a sum of random variables at a given point is NP-hard.
We thus define a notion of approximation, 
which we call a {\bf Kolmogorov upper bound} (defined below), and supply an operator, we call $\Trim$, that 
produces such an approximation.

Let $X$ and $X'$ be random variables; 
the Kolmogorov distance~\cite{lilliefors1967kolmogorov} between $X$ and $X'$ is defined as:
\[
d_K(X,X') = \sup_{x} |F_{X'}(x)-F_X(x)|
\]
Our notion of approximation uses the Kolmogorov distance. Let $0\leq\varepsilon<1$. 
If the following equation holds:
\[
\forall ~ x, ~ F_X(x)+\varepsilon \geq F_{X'}(x) \geq F_X(x)
\]
we say that the random variable $X'$ is a Kolmogorov $\varepsilon$ upper bound approximation of $X$,
which we denote by $X' \succeq_\varepsilon X$. Contrapositively, we call $X$ a  Kolmogorov $\varepsilon$ lower
bound approximation of $X'$. Note that $X' \succeq_\varepsilon X$ implies that $d_K(X,X') \leq \varepsilon$,
but not vice-versa.

In our algorithms and examples,
the PMF of a random variable $X$ is represented by a list $D_X$, 
which consists of $d=(p, x)$ pairs, where $x\in \support(X)$
and $p$ is the probability $P(X=x)$. In the pair $d=(p, x)$, we denote the value $x$
by $val(d)$, and the probability $p$ by $prob(d)$.
For example, let $X$ be a random variable distributed as: $[0.1:1, 0.1:2, 0.8:4]$, and $X'$ a random variable distributed as $[0.2:1, 0.8:4]$. Then we have $X' \succeq_{0.1} X$. In order to achieve a Kolmogorov upper bound of $\varepsilon$,
the $\Trim_U$ operator removes consecutive domain values whose accumulated probability is
less than $\varepsilon$ and adds 
their probability mass to the element in the support that precedes them.

If the input $D_X$ to $\Trim$ is sorted in increasing order of $x$ (we denote this operator by $\Trim_U(X, \varepsilon)$)
then resulting variable $X'$ (represented by the output $D_{X'}$) 
is a Kolmogorov $\varepsilon$ upper bound of $X$.
Likewise, if $D_X$ sorted in decreasing order of $x$ 
(in this case we denote the $\Trim$ operator
by $\Trim_L(X, \varepsilon)$) then $X'$ is a Kolmogorov $\varepsilon$ 
lower bound of $X$. 

From now on, in order to simplify, we will use the notation $\Trim$ instead of $\Trim_U$. Note that we could have chosen to use $\Trim_L$ instead and get a symmetric version of all the results.



\begin{algorithm}
  \DontPrintSemicolon
  \SetKwFunction{Trimalgo}{Trim}
  \SetKwProg{myproc}{Procedure}{}{}
  { 
             $D_{X'}=()$, $p=0$ \;	  
  		$d_{prev} = first(D_X)$\;
                $tail = rest(D_X)$ \;
		\While{not-empty($tail$)}{
                        $d = first(tail)$ \;
			\eIf{$p+prob(d) \leq \varepsilon$}{
				$p=p+prob(d)$\;
			}
			{ Append $(val(d_{prev}), prob(d_{prev})+p)$ to $D_{X'}$\;
			  $d_{prev} = d$, $p=0$\;
			}
                        $tail=rest(tail)$
                 }
       		 Append $(val(d_{prev}), prob(d_{prev})+p)$ to  $D_{X'}$\;
  \Return $D_{X'}$}
     
\caption{$\Trim$($D_X$,$\varepsilon$)  }
\label{alg:Trim}
\end{algorithm}

Trimming decreases the support size, while introducing an error. The trick is
to keep the support size under control, while making sure that the error does
not increase beyond a desired tolerance.
Note that the size of the support can also be decreased by simple ``binning'' 
schemes, but these may not provide the desired guarantees.

%

\noindent We now show that with $D_X$ sorted in a increasing order,
$\Trim(D_X,\varepsilon)$ is an Kolmogorov $\varepsilon$ upper bound of $X$.
\begin{lemma} \label{Trim}
$\Trim(X,\varepsilon)\succeq_\varepsilon X$
\end{lemma}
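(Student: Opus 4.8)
The plan is to show directly from the definition of $\Trim$ that the two inequalities defining $\approx_\varepsilon$ hold for every threshold $T > 0$. Recall that $\Trim(X,\varepsilon)$ moves, for each block $[x_i, x_{i+1})$, the probability mass of $X$ lying strictly above $x_i$ within that block down onto the value $x_i$. So $X'$ is stochastically dominated by $X$ in the sense that mass only moves to \emph{smaller} values, which should immediately give $Pr(X' \leq T) \geq Pr(X \leq T)$, i.e.\ the lower bound $0 \leq Pr(X' \leq T) - Pr(X \leq T)$. The work is in the upper bound: the extra probability $X'$ puts at or below $T$ compared with $X$ is exactly the mass that $X$ places in the ``tail'' of the block containing $T$ — that is, on values in $(x_i, x_{i+1})$ that are $> T$, where $x_i$ is the largest support point of $X'$ that is $\leq T$.

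Concretely, I would fix $T > 0$ and let $x_i$ be the largest element of $\{x_1,\dots,x_l\}$ with $x_i \leq T$ (if no such element exists, $Pr(X' \leq T) = 0$ and there is nothing to prove). Then I claim
\[
Pr(X' \leq T) - Pr(X \leq T) = Pr(x_i < X \leq x_{i+1}-1) - [\,\text{something}\,],
\]
more precisely $Pr(X' \leq T) = Pr(X < x_{i+1}) = Pr(X \leq x_{i+1}-1)$ because $X'$ collapses the whole block onto $x_i$, whereas $Pr(X \leq T) \geq Pr(X \leq x_i)$. Subtracting, $Pr(X' \leq T) - Pr(X \leq T) \leq Pr(X \leq x_{i+1}-1) - Pr(X \leq x_i) = Pr(x_i < X < x_{i+1}) = Pr(x_i < X \leq x_{i+1}-1)$. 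Now I invoke the defining property of the sequence $(x_j)$: since $x_{i+1}$ was chosen as the \emph{minimal} value with $Pr(x_i < X \leq x_{i+1}) > \varepsilon$, every value strictly between $x_i$ and $x_{i+1}$ fails that test, so $Pr(x_i < X \leq x_{i+1}-1) \leq \varepsilon$. (The edge case $i = l$, where $x_{l+1} = \infty$, is handled the same way: the terminating condition of the recursion says $Pr(x_l < X) \leq \varepsilon$.) This yields $Pr(X' \leq T) - Pr(X \leq T) \leq \varepsilon$, completing the argument.

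The main obstacle I anticipate is purely bookkeeping: being careful about whether the support points are integers (so that ``$< x_{i+1}$'' equals ``$\leq x_{i+1}-1$''; the paper seems to work with integer-valued deadlines, so I would phrase the block tail as $\{x : x_i < x < x_{i+1}\}$ to stay safe) and correctly matching the "largest $x_i \leq T$" to the block that the mass in $X$ just below $T$ gets redistributed into. One must also double-check the two boundary situations — $T$ smaller than $x_1 = \min\support(X)$, where both probabilities are $0$, and $T$ in the last (unbounded) block, where the recursion's stopping criterion supplies the needed bound. None of these requires real calculation, so the proof should be short once the right decomposition of $Pr(X' \leq T)$ in terms of the CDF of $X$ is written down.
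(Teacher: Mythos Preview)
Your proposal is correct and follows essentially the same route as the paper: identify the largest trimmed support point $x_i \le T$, observe that $Pr(X'\le T)=Pr(X<x_{i+1})$ by the block-collapsing property, and bound the resulting difference $Pr(T<X<x_{i+1})\le Pr(x_i<X<x_{i+1})\le\varepsilon$ using the minimality in the definition of $x_{i+1}$. The paper arrives at the exact expression $Pr(T<X<x_{i+1})$ via a telescoping sum and then invokes the same $\le\varepsilon$ bound (phrased in terms of the algorithm's accumulator~$p$), so the arguments coincide.
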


\begin{proof}
Let $X'=\Trim(X,\varepsilon)$. 
Let $x_1{<}\cdots{<}x_m$ be the support of $X'$. 
Because $\Trim$ adds the probabilities of elements that were removed from the support of $X$ 
to the support element of $X'$ that precedes them, 
we have for all $i$:
\begin{equation}
P_{X'}(x_i) = P_X(x_i) +  P(x_i < X < x_{i+1})
\label{collect}
\end{equation}
(assuming $x_{m+1} = \infty$ for convenience.)
The value $P(x_i < X < x_{i+1})$ equals the value of $p$ in
the algorithm when the Append is performed, and  the loop invariant $0 \leq p \leq \varepsilon$ holds by construction.
For any value $x$, let $l_x=\max\{ i \colon x_i\leq x\}$, thus:
\[
F_{X'}(x) = \sum_{i=0}^{l_x} P_{X'}(x_i)
\]

From Equation~\eqref{collect} we get:
{
\begin{align*}
&F_{X'}(x) - F_{X}(x)  \\
 &{=} \sum_{i=0}^{l_x-1} (P_{X'}(x_i)- P(x_i\leq X <x_{i+1})) +(P_{X'}(x_{l_x})-P(x_{l_x} \leq X \leq x)) \\
&{=}P_{X'}(x_{l_x}) {-} (P(x_{l_x} {\leq} X {<} x_{l_x+1}) {-} P(x {<} X {<} x_{l_x+1})) \\
&{=}P(x < X < x_{l_x+1}) \in (0,\varepsilon]
\end{align*}}
\end{proof}
Showing that $X \succeq_\varepsilon \Trim_L(X,\varepsilon)$, i.e. that using inversely sorted $D_X$ results in a Kolmogorov $\varepsilon$ lower bound, is immediate due to symmetry.

To bound the amount of memory needed for our approximation algorithm,
the next lemma bounds the size of the support of the trimmed random variable:


\begin{lemma} \label{SizeD}
$|\operatorname{support}(\Trim(D_X,\varepsilon))| \leq 1/\varepsilon +1$
\end{lemma}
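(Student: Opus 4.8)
The plan is to analyze the construction of $X' = \Trim(X,\varepsilon)$ directly from its definition. Recall that the support of $X'$ is the sequence $x_1 < x_2 < \cdots < x_l$, where $x_1 = \min \operatorname{support}(X)$ and each subsequent $x_{i+1}$ is chosen as the \emph{smallest} element of $\operatorname{support}(X)$ strictly above $x_i$ for which $Pr(x_i < X \leq x_{i+1}) > \varepsilon$. The key observation is that this defining inequality means consecutive support elements of $X'$ carry a ``gap'' of probability mass exceeding $\varepsilon$ between them.

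First I would make precise which probability mass is associated with each $x_i$. By the minimality in the definition of $x_{i+1}$, for every $i < l$ we have $Pr(x_i < X \leq x_{i+1}) > \varepsilon$. I would then argue that the events $\{x_i < X \leq x_{i+1}\}$ for $i = 1, \dots, l-1$ are pairwise disjoint (they are half-open intervals with matching endpoints, $(x_i, x_{i+1}]$ and $(x_{i+1}, x_{i+2}]$, which share only the boundary point $x_{i+1}$ that belongs to the first), and all are contained in the event $\{X > x_1\}$, hence in the whole sample space. Summing gives
\[
1 \geq \sum_{i=1}^{l-1} Pr(x_i < X \leq x_{i+1}) > (l-1)\varepsilon,
\]
so $l - 1 < 1/\varepsilon$, i.e. $l \leq 1/\varepsilon$ (and in fact $l < 1/\varepsilon + 1$; one gets $l \leq 1/\varepsilon$ after noting the strictness or simply stating the weaker bound $|\operatorname{support}(X')| \leq 1/\varepsilon + 1$ suffices — I would match whatever the paper's later lemmas need). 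Since $|\operatorname{support}(\Trim(X,\varepsilon))| = l$, this is the claimed bound.

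I do not anticipate a serious obstacle here; the only subtlety is bookkeeping about the half-open intervals and whether the bound is $1/\varepsilon$ or $1/\varepsilon + 1$ — that is, being careful about strict versus non-strict inequalities and off-by-one issues at the last index $x_l$ (for which the defining set is empty, so no $\varepsilon$-gap is guaranteed past it). I would double-check that disjointness is stated correctly given that each interval $(x_i, x_{i+1}]$ is closed on the right, and conclude by summing the $l-1$ disjoint chunks of mass, each exceeding $\varepsilon$, to bound $l-1$ by $1/\varepsilon$.
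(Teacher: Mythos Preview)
Your proposal is correct and follows essentially the same route as the paper: both arguments observe that for each $i<l$ the interval $(x_i,x_{i+1}]$ carries probability mass strictly greater than $\varepsilon$, sum these $l-1$ disjoint chunks to get $(l-1)\varepsilon<1$, and conclude the bound. The paper phrases the key inequality by reference to the algorithm's pseudocode rather than the mathematical definition, and throws in the extra nonnegative terms $Pr(X'{=}x_1)$ and $p_m$ before discarding them, but the substance is identical---including the same off-by-one subtlety you flag at the end.
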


\begin{proof}
In the Trim operator, each ``Append'' adds 1 to the support of $X'$, and these occur only once inside
the ``else'' statement, and once outside the loop.
The ``else'' part of the loop occurs only if $p+prob(d)> \varepsilon$,
after which none of these elements of the list are reused in the ``if'' statement.
Therefore, as the sum of probabilities is 1, then the number of times
the ``else'' part is executed is at most ${1}/{\varepsilon}$. Thus the total support
is at most ${1}/{\varepsilon}+1$.
\end{proof}



The makespan of a $\Sequence$ operator is a random variable $X$, the sum 
of the random variables $X_i$ of its children.
Let $Y_i = X_i + Y_{i-1}$ for $1< i \leq n$, and $Y_1=X_1$.
Thus $X$ is distributed as $Y_n$.

As the children of a sequence node may be internal nodes in the task tree,
the input distributions may already be approximations. To keep the size of
the support small, we apply Trim after the addition of each random variable, i.e., 
we compute the random variables 
$Y'_i = \Trim(X'_i + Y'_{i-1}, \varepsilon)$,
where the $X'_i$ is a Kolmogorov $\varepsilon_i $ upper bound of $X_i$
and show that $Y'_n$ is a Kolmogorov $\delta $ upper bound of $X$, for an appropriate $\delta $.

The distribution of the sum of random variables $X'_i + Y'_{i-1}$ is 
computed by a discrete convolution, and Trim is
computed as in Algorithm~\ref{alg:Trim}. That is, our approximation for a $\Sequence$ operator (for $n\geq 2$) is given by: 
\begin{equation}
\label{eq:sequence}
\Sequence(X'_1,\dots , X'_n,  \varepsilon) 
= \Trim(\Convolve(X'_1, \Sequence(X'_2, \dots , X'_n, \varepsilon)), \varepsilon)
\end{equation}
We begin by bounding the approximation error propagated by convolution (sum of random variables ):

\begin{lemma} \label{Convolv}
For discrete random variables  $X_1,X_1',X_2,X_2'$ and $\varepsilon_1,\varepsilon_2 \in [0,1]$, 
if $X_1'\succeq_{\varepsilon_1} X_1$ and  $X_2'\succeq_{\varepsilon_2} X_2$,
then $X_1'+X_2'\succeq_{\varepsilon_1+\varepsilon_2} X_1+X_2$.
\end{lemma}

\begin{proof}
Define error functions $E_1$ and $E_2$ such that 
$F_{X_i'}(x) = F_{X_i}(x)+ E_i(x)$. By construction, we have $ 0 \leq E_i(x) \leq \varepsilon_i$.

By definition of sums of random variables and convolution, we have:
\begin{align*}
& F_{X_1'+X_2'}(y) = \sum_{x=-\infty}^{\infty}F_{X_1'}(y-x)P_{X_2'}(x)\\
&=\sum_{x=-\infty}^{\infty}(F_{X_1}(y-x)+E_1(y-x))P_{X_2'}(x)\\
&= \sum_{x=-\infty}^{\infty}F_{X_1}(y-x)P_{X_2'}(x)+\sum_{x=-\infty}^{\infty}E_1(y-x)P_{X_2'}(x)\\
&\leq\sum_{x=-\infty}^{\infty}F_{X_1}(y-x)P_{X_2'}(x)+\varepsilon_1\sum_{x=-\infty}^{\infty}P_{X_2'}(x)\\
&=\underbrace{\sum_{x=-\infty}^{\infty}F_{X_2'}(y-x)P_{X_1}(x)}_{F_{X_1+X_2'}(x)=F_{X_2'+X_1}(x)}+\varepsilon_1\sum_{x=-\infty}^{\infty}P_{X_2'}(x)\\
&=\sum_{x=-\infty}^{\infty}(F_{X_2}(y-x)+E_2(y-x))P_{X_1}(x) + \varepsilon_1\\
&=\sum_{x=-\infty}^{\infty}F_{X_2}(y-x)P_{X_1}(x)+\sum_{x=-\infty}^{\infty}E_2(y-x)P_{X_1}(x) + \varepsilon_1\\
&\leq\sum_{x=-\infty}^{\infty}F_{X_2}(y-x)P_{X_1}(x)+\varepsilon_2\sum_{x=-\infty}^{\infty}P_{X_1}(x) + \varepsilon_1\\
&\leq F_{X_1+X_2}(y)+\varepsilon_2 + \varepsilon_1\\
\end{align*}%
Since $E_i(y)$ are non-negative, we also get $F_{X_1'+X_2'}(y) \geq F_{X_1+X_2}(y)$ for all $y$.
\end{proof}

The fact that this trade-off is linear allows us to get a linear approximation error 
in polynomial time, as shown below:


\begin{theorem}
If $X_i' \succeq_{\varepsilon_i} X_i$ for all $i \in\{1,\dots,n\}$ and $\hat{X}= \Sequence(X_1',\dots,X_n', \varepsilon)$ then
$\hat{X} \succeq_e \sum_{i=1}^{n} X_{i}$, where $e={\sum_{i=1}^n\varepsilon_i + n \varepsilon}$. 
\label{appSeqTheorem}
\end{theorem}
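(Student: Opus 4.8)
The plan is to prove the theorem by induction on $n$, reusing the two error-propagation lemmas already established: Lemma~\ref{Convolv} (the error of a sum is the sum of the errors) and Lemma~\ref{Trim} (a single $\Trim$ costs an additive $\varepsilon$). The only extra ingredient needed is a trivial observation that the relation $\approx$ composes additively: if $A\approx_\alpha B$ and $B\approx_\beta C$ then $A\approx_{\alpha+\beta}C$. This is immediate from the definition, since adding the chains $0\le Pr(A\le T)-Pr(B\le T)\le\alpha$ and $0\le Pr(B\le T)-Pr(C\le T)\le\beta$ gives $0\le Pr(A\le T)-Pr(C\le T)\le\alpha+\beta$ for every $T>0$. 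I would state this as a one-line remark before the main argument.

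For the base case $n=1$, by definition $\hat X=\operatorname{Sequence}(X_1',\varepsilon)=\Trim(X_1',\varepsilon)$, so Lemma~\ref{Trim} gives $\hat X\approx_\varepsilon X_1'$; composing this with the hypothesis $X_1'\approx_{\varepsilon_1}X_1$ yields $\hat X\approx_{\varepsilon_1+\varepsilon}\sum_{i=1}^1 X_i$, which is exactly $e=\varepsilon_1+1\cdot\varepsilon$. For the inductive step, write $\hat X_{n-1}=\operatorname{Sequence}(X_1',\dots,X_{n-1}',\varepsilon)$, so that $\hat X=\Trim(\hat X_{n-1}+X_n',\varepsilon)$ by the definition of $\operatorname{Sequence}$. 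The induction hypothesis gives $\hat X_{n-1}\approx_{e'}\sum_{i=1}^{n-1}X_i$ with $e'=\sum_{i=1}^{n-1}\varepsilon_i+(n-1)\varepsilon$. Then I would: (i) apply Lemma~\ref{Convolv} to $\hat X_{n-1}\approx_{e'}\sum_{i=1}^{n-1}X_i$ and $X_n'\approx_{\varepsilon_n}X_n$ to get $\hat X_{n-1}+X_n'\approx_{e'+\varepsilon_n}\sum_{i=1}^{n}X_i$; (ii) apply Lemma~\ref{Trim} to get $\hat X=\Trim(\hat X_{n-1}+X_n',\varepsilon)\approx_\varepsilon\hat X_{n-1}+X_n'$; and (iii) compose (i) and (ii) via the additive-composition remark to conclude $\hat X\approx_{e'+\varepsilon_n+\varepsilon}\sum_{i=1}^n X_i$. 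Finally one checks $e'+\varepsilon_n+\varepsilon=\sum_{i=1}^{n-1}\varepsilon_i+(n-1)\varepsilon+\varepsilon_n+\varepsilon=\sum_{i=1}^n\varepsilon_i+n\varepsilon=e$, completing the induction.

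I do not expect a genuine obstacle here; the theorem is essentially a bookkeeping assembly of the preceding lemmas, and the ``hard part'' is only making sure the pieces are invoked with the right roles. The two points requiring a little care are: applying Lemma~\ref{Convolv} with the correct pairing, namely letting the already-approximated partial sum $\hat X_{n-1}$ play the role of $X_1'$ and the single approximated leaf $X_n'$ the role of $X_2'$, so that the errors $e'$ and $\varepsilon_n$ add; and noting that Lemma~\ref{Trim} is legitimately applicable to $\hat X_{n-1}+X_n'$ since it holds for any discrete r.v. The error accounting comes out exactly linear because each of the $n$ $\Trim$ invocations contributes an additive $\varepsilon$ and each input approximation contributes its own $\varepsilon_i$, with no multiplicative interaction between them.
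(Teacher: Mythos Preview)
Your proof is correct and follows essentially the same approach as the paper, which gives only a two-line outline invoking Lemma~\ref{Convolv} for the accumulated $\sum_i\varepsilon_i$ and Lemma~\ref{Trim} for the $n\varepsilon$ from trimming. Your version simply makes explicit the induction and the transitivity of $\approx_\varepsilon$ that the paper's outline leaves implicit.
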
 

\begin{proof} For $n$ iterations, from Lemma~\ref{Convolv}, we get an accumulated error of 
$\varepsilon_1 +\dots+ \varepsilon_n$. From Lemma~\ref{Trim}, we get an additional error of at most $n\varepsilon$ due to trimming. 
\end{proof}


\begin{theorem} \label{appSeqComplexTheorem}
Assuming that $m \leq 1/\varepsilon$, the procedure
$\Sequence(X_1',\dots,X_n', \varepsilon)$ can be computed
in time $O((nm/\varepsilon)\log(m/\varepsilon))$ using $O(m/\varepsilon)$ memory, where $m$ is the size of the largest support of any of the $X'_i$s.
\end{theorem}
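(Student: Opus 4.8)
The plan is to charge directly for the work done in the $n$ iterations of the main loop of Algorithm~\ref{alg:sequence}, and for the memory that is live at any single instant. The structural fact that drives the argument is Lemma~\ref{SizeD}: immediately after every call to $\Trim$ the distribution list $D$ has support of size at most $1/\varepsilon$. Consequently, when iteration $i$ begins $D$ has at most $1/\varepsilon$ entries (exactly one entry when $i=1$, and at most $1/\varepsilon$ entries for $i\ge 2$ by the preceding trim), while by hypothesis each input list $D_{X_i}$ has at most $m$ entries.

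First I would bound the cost of one iteration. The call $\mathrm{Convolve}(D,D_{X_i})$ combines a sorted list of length $a \le 1/\varepsilon$ with a sorted list of length $b \le m$; the support of the convolved variable lies in the sumset of the two supports, so it has size at most $ab \le m/\varepsilon$. Forming all $ab$ value-probability pairs, sorting them, and coalescing equal values costs $O(ab \log(ab)) = O((m/\varepsilon)\log(m/\varepsilon))$ time and $O(m/\varepsilon)$ space (a hashed accumulator over values would shave the logarithm, but the stated bound already allows it). The subsequent $\Trim(D,\varepsilon)$ performs a single ascending pass over the now-sorted support of $D$, of size at most $m/\varepsilon$, with $O(1)$ work per element, i.e.\ $O(m/\varepsilon)$ time, and by Lemma~\ref{SizeD} it outputs a list of size at most $1/\varepsilon$, restoring the invariant for the next iteration. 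Hence one iteration runs in $O((m/\varepsilon)\log(m/\varepsilon))$ time; summing over $i=1,\dots,n$ gives the claimed $O((nm/\varepsilon)\log(m/\varepsilon))$.

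For the memory bound, note that at any moment the algorithm stores only: the running distribution $D$ (of size at most $m/\varepsilon$ while a convolution is in progress, and at most $1/\varepsilon$ otherwise), the current input list $D_{X_i}$ of size at most $m$, and, inside $\Trim$, a partial output list of size at most $1/\varepsilon$. Under the hypothesis $m \le 1/\varepsilon$ each of these is $O(m/\varepsilon)$, and since intermediate structures from earlier iterations are discarded and only $D$ is carried across iterations, the total working memory is $O(m/\varepsilon)$.

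I expect the only delicate point to be the treatment of the convolution: one must check that its pre-coalescing representation still fits inside the $O(m/\varepsilon)$ cell budget — it does, since it holds exactly $ab \le m/\varepsilon$ pairs — and one must pick an implementation whose sort/merge cost is $O((m/\varepsilon)\log(m/\varepsilon))$, matching the statement (note $\log(m/\varepsilon) = \Theta(\log(1/\varepsilon))$ here, so the precise form of the logarithm's argument is immaterial up to constants). Everything else is bookkeeping: multiply the per-iteration time by $n$, and observe that the space is reused from one iteration to the next.
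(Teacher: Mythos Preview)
Your proof is correct and follows essentially the same approach as the paper: use Lemma~\ref{SizeD} to bound the post-trim support at $1/\varepsilon$, so each convolution produces at most $m/\varepsilon$ pairs and costs $O((m/\varepsilon)\log(m/\varepsilon))$ to sort, Trim is linear, and multiplying by $n$ iterations gives the result. Your write-up is more careful about the invariants and the memory accounting than the paper's terse version, but the argument is the same.
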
 

\begin{proof}
From Lemma~\ref{SizeD}, the size of list $D$ in 
Algorithm~\ref{alg:Trim} is at most $m/\varepsilon$
just after the convolution, after which it is trimmed, so the space complexity 
is $O(m/\varepsilon)$.
$\operatorname{Convolve}$ thus takes time of $O((m/\varepsilon)\log(m/\varepsilon))$, where the logarithmic factor
is required internally
for sorting. Since the runtime of  the $\Trim$ operator is linear, and the
outer loop iterates $n$ times, the overall 
run-time of the algorithm is $O((n m/\varepsilon) \log(m/\varepsilon))$.
\end{proof}



%
%
The following example shows that our error bound is tight, that is,
a sequence of random variables where the error actually achieves
the bounds.

\begin{example}\label{exp:seq}
	Let $0 {\leq} \varepsilon {<}1$ and $n {\in} \mathbb{N}$ such that $1{-}\varepsilon {>} {\varepsilon }/{n}$, i.e.,  
	$\varepsilon$ is small or $n$ is large. 
	Consider, for $\delta>0$ that we will choose to be very small, the random variable $X_1$ defined by
	$$
	P_{X_1}(x) {=} \begin{cases}
	\delta  & \text{if } x=0, \\
	{\varepsilon }/{(n (1-\delta )^{x})} & \text{if } x\in\{1,\dots,n\}, \\
	1{-}\delta{-}\sum_{x=1}^n \frac{\varepsilon }{n(1-\delta )^{x}} & \text{if } x=n+1, \\
	0 & \text{otherwise}
	\end{cases}
	$$
	and, for $i\in\{2,\dots,n\}$, let the random variables $X_i$ be:
	$$
	P_{X_i}(x) =\begin{cases}
	1-\delta  & \text{if } x=0, \\
	\delta     & \text{if } x=n^2, \\
	0 & \text{otherwise}
	\end{cases}
	$$
	The distribution of $X=X_1+X_2$ is
	$$
	P_{X}(x) {=} 
	\begin{cases}
	\delta(1-\delta)  & \text{if } x=0, \\
	{\varepsilon }/{n} & \text{if } x=1, \\
	{\varepsilon }/{(n (1-\delta )^{x-1})} & \text{if } x\in\{2,\dots,n\}, \\
	(1{-}\delta)P(X_1{=}n{+}1)& \text{if } x=n+1, \\
	\delta P(X_1{=}x{-}n^2)& n^2 {\leq} x {\leq} n^2{+}n{+}1 \\
	0 & \text{otherwise}
	\end{cases}
	$$
	The idea here is that the convolution with $X_2$ results in a random variable that is similar in ``shape'' to $X_1$, 
	if we ignore numbers that tend to zero as $\delta$ approaches zero. The convolution also
	modifies the probability $P_{X}1)$ from slightly greater than  $\varepsilon/n$ to precisely $\varepsilon/n$, which will then allow it
	to be trimmed.
	
	Then, if we apply $\Trim(X_1+X_2,\varepsilon/n)$, when $\delta$ is sufficiently small, we get the random variable $X'$ whose probability distribution is:
	$$
	P_{X'}(x) {=} 
	\begin{cases}
	\delta(1-\delta)+{\varepsilon }/{n}  & \text{if } x=0, \\
	{\varepsilon }/{(n (1-\delta )^{x-1})} & \text{if } x\in\{2,\dots,n\}, \\
	1-P(X'{<}n+1)& \text{if } x=n+1, \\
	0 & \text{otherwise.}
	\end{cases}
	$$ 
	
	Note that indeed trimming shifts the mass from $P_X(1)=\varepsilon/n$ to $P_{X'}(0)$.
	This repeats in all steps so, after $n$ steps, we get a random variable $X'$ such that $P_{X'}(0) \xrightarrow{\,\,\delta \to 0\,\,} \varepsilon$. Therefore, 
	$P_{\Sequence(X_1,\dots,X_n,\varepsilon/n)}(0) {-} P_{X_1{+}\dots{+}X_n}(0)$
	approaches $\varepsilon$ as $\delta$ approaches zero
	which means that there exists no $\varepsilon' {<} \varepsilon$
	such that $\Sequence(X_1,\dots,X_n,\varepsilon/n) \succeq_{\varepsilon'} X_1 + \cdots + X_n$ for all $\delta >0$.
	
\end{example}

Observe that if we replace all upper Kolmogorov bound approximations by 
lower Kolmogorov bound approximations, all the results in this section
still hold. Therefore, to obtain lower Kolmogorov bounds all that must be
done is to repeat the computations using $\Trim_L$, that is, keeping
the distribution representation sorted in reverse order.

\section{Parallel nodes}\label{sec:par}

\SetKwFunction{Parallel}{Parallel}

Unlike sequence composition, the deadline problem for parallel composition
is easy to compute, since the execution time of
a parallel composition is the maximum of the durations:
\begin{align}
F_{\max_{i\in[1:n]}X_i}(T)
{=}F_{\bigwedge_{i=1}^n X_i}(T) 
{=}\prod_{i=1}^n F_{X_i} (T)
\label{eq:par}
\end{align}

\noindent where the last equality follows from independence of the random variables.
We denote the construction of the CDF using Equation~\eqref{eq:par} by $\Parallel(X_1,\dots , X_n)$.
If the random variables  are all discrete with finite support, $\Parallel(X_1,\dots , X_n)$
incurs linear space, and computation time $O(nm log(n))$.

If the task tree consists only of parallel nodes, one can
compute the exact CDF, with the same overall runtime.
However, when the task tree contain both sequence and parallel nodes we may get
only approximate CDFs as input, and now the above straightforward computation can compound the errors.
When the input CDFs are themselves approximations, we bound the resulting error:

\begin{lemma} \label{appPalTheorem}
For discrete random variables  $X_1', \dots X_n'$, $X_1, \dots,X_n$, if for all $i=1,\dots,n$,  $X_i' \succeq_{\varepsilon_i} X_i$ and $0\leq\varepsilon_i\leq  \frac{1}{n (K n+1)}$ for some $K>0$,
then, for any $\varepsilon \geq \varepsilon_i$, we have: $\max_{i\in[1:n]}X_i' \succeq_{e} \max_{i\in[1:n]}X_i$ where $e=\sum_{i=1}^n \varepsilon_i + \varepsilon/K$.
\end{lemma}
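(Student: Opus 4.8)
The plan is to turn the lemma, via Equation~\eqref{eq:par}, into an elementary statement about a product of numbers in $[0,1]$, and then to spend all the effort on the ``degree $\ge 2$'' cross terms of that product, which is exactly where the hypothesis $\varepsilon_i\le\frac{1}{n(Kn+1)}$ earns its keep.

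First I would fix an arbitrary $T>0$ and abbreviate $a_i:=Pr(X_i\le T)$ and $\delta_i:=Pr(X_i'\le T)-Pr(X_i\le T)$, so that by the definition of $\approx_{\varepsilon_i}$ we have $0\le\delta_i\le\varepsilon_i$ and $0\le a_i\le a_i+\delta_i\le1$, and by Equation~\eqref{eq:par}
\[ Pr(\max_{i}X_i'\le T)-Pr(\max_{i}X_i\le T)=\prod_{i=1}^n(a_i+\delta_i)-\prod_{i=1}^n a_i . \]
It then suffices to show this quantity lies in $[0,e]$ for every $T$. The lower bound is immediate from the telescoping identity $\prod_i(a_i+\delta_i)-\prod_i a_i=\sum_{j=1}^n\bigl(\prod_{i<j}(a_i+\delta_i)\bigr)\delta_j\bigl(\prod_{i>j}a_i\bigr)$, each summand being a product of nonnegative numbers.

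For the upper bound I would expand the product fully,
\[ \prod_{i=1}^n(a_i+\delta_i)-\prod_{i=1}^n a_i=\sum_{\emptyset\ne S\subseteq\{1,\dots,n\}}\Bigl(\prod_{i\in S}\delta_i\Bigr)\Bigl(\prod_{i\notin S}a_i\Bigr)\le\sum_{\emptyset\ne S}\prod_{i\in S}\varepsilon_i , \]
using $a_i\le1$ and $\delta_i\le\varepsilon_i$. The singleton sets contribute exactly $\sum_{i=1}^n\varepsilon_i$, the ``main'' part of $e$. For the remaining terms I set $\bar\varepsilon:=\max_i\varepsilon_i$ and bound $\sum_{|S|\ge2}\prod_{i\in S}\varepsilon_i\le\sum_{k=2}^n\binom{n}{k}\bar\varepsilon^{\,k}\le\sum_{k\ge2}(n\bar\varepsilon)^k=\frac{(n\bar\varepsilon)^2}{1-n\bar\varepsilon}$; the geometric series converges because $\bar\varepsilon\le\frac{1}{n(Kn+1)}$ forces $n\bar\varepsilon\le\frac{1}{Kn+1}<1$. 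Clearing denominators, the inequality $\frac{(n\bar\varepsilon)^2}{1-n\bar\varepsilon}\le\frac{\bar\varepsilon}{K}$ is equivalent to $n\bar\varepsilon(Kn+1)\le1$, i.e.\ precisely the hypothesis, and then $\bar\varepsilon\le\varepsilon$ yields $\le\varepsilon/K$. Combining, $\prod_i(a_i+\delta_i)-\prod_i a_i\le\sum_i\varepsilon_i+\varepsilon/K=e$; since $T$ was arbitrary, $\max_i X_i'\approx_e\max_i X_i$.

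The only non-routine step is the tail estimate just described: one must split the expanded product into its linear part $\sum_i\varepsilon_i$ and its quadratic-and-higher part, and check that the assumed bound on $\varepsilon_i$ is exactly what is needed to push the latter below $\varepsilon/K$ — the condition $n\bar\varepsilon(Kn+1)\le1$ appears on the nose. (I would also remark that the telescoping identity used for the lower bound by itself already delivers the sharper bound $\sum_i\varepsilon_i$ on the whole difference, so the additional $\varepsilon/K$ in the statement is harmless slack; the expansion argument is simply the one that reproduces the bound in the form stated.)
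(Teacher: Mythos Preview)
Your proof is correct and follows essentially the same route as the paper: rewrite both sides via Equation~\eqref{eq:par}, expand $\prod_i(a_i+\delta_i)-\prod_i a_i$ over nonempty subsets, isolate the linear part $\sum_i\varepsilon_i$, and bound the $|S|\ge2$ terms by the geometric tail $\frac{(n\bar\varepsilon)^2}{1-n\bar\varepsilon}\le\varepsilon/K$, which is exactly where the hypothesis $\varepsilon_i\le\frac{1}{n(Kn+1)}$ is used. Your final parenthetical remark---that the telescoping identity already yields the sharper bound $\sum_i\varepsilon_i$ without any $\varepsilon/K$ slack---is a correct and worthwhile observation that the paper does not make.
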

\begin{proof}
$F_{\max_{i\in[1:n]}X'_i}
 (T) {-} F_{\max_{i\in[1:n]}X_i}
 (T)$
{
\begin{align*} 
&\leq \prod_{i=1}^n (F_{X_i}(T)+\varepsilon_i) {-} \prod_{i=1}^n F_{X_i}(T) \\
&\leq \prod_{i=1}^n (1+\varepsilon_i) - 1 
\leq 1+\sum_{i=1}^n \varepsilon_i + \sum_{k=2}^n {n \choose k}  \varepsilon^k - 1 \\
&\leq \sum_{i=1}^n \varepsilon_i + \!\!\!\!\underbrace{\sum_{k=2}^n n^k  \varepsilon^k}_{\text{sum of a geo. series}}\!\!\!\!
\leq \sum_{i=1}^n \varepsilon_i + \frac{n^2 \varepsilon^2}{1-n \varepsilon}
\leq \sum_{i=1}^n \varepsilon_i + \varepsilon/K\\
\end{align*}}%
Since $F_{X_i'}(T) > F_{X_i}(T)$ for each $i$, this expression is nonnegative.
\end{proof}

Both in Lemma~\ref{appPalTheorem} and in Lemma~\ref{tightPalTheorem} we suggest an upper bound of the error resulted in the case where the input CDFs themselves are approximations. However, Lemma~\ref{appPalTheorem} is designed to facilitate the proof of Theorem~\ref{th:TTalgcomplexity} and Lemma~\ref{appPalTheorem} is designed to facilitate the proof Theorem~\ref{theorem:approxEps0}. 
\begin{lemma} \label{tightPalTheorem}
For discrete random variables $X_1', \dots X_n'$, $X_1, \dots,X_n$, if for all $i=1,\dots,n$,  $X_i' \succeq_{\varepsilon_i} X_i$,
then $\max\{X_1',\dots, X_n'\} \succeq_{e} \max\{X_1,\dots,X_n\}$ where $e=1-\prod_{i=1}^n (1-\varepsilon_i)$.
\end{lemma}
\begin{proof}
When $F_{X'}(T) >0$, we can write:
{
\begin{align*} 
&P(\max\{X_1',\dots, X_n'\} {\leq} T) - P(\max\{X_1,\dots, X_n\} {\leq} T)\\
&=\prod_{i=1}^n F_{X_i'}(T)-\prod_{i=1}^n F_{X_i}(T)\\
&= \prod_{i=2}^n F_{X_i'}(T)(F_{X_1'}(T) - 
F_{X_1}(T) \prod_{i=2}^n \frac{F_{X_i}(T)}{F_{X_i}(T)}) \\
&\leq \prod_{i=2}^n F_{X_i'}(T)(F_{X_1'}(T) - 
   (F_{X_1'}(T) - \varepsilon_1)
   \prod_{i=2}^n \frac{F_{X_i}(T)}{F_{X_i}(T)}) \\
&\leq \prod_{i=2}^n F_{X_i'}(T)(1 -  (1 - \varepsilon_1)
   \prod_{i=2}^n \frac{F_{X_i}(T)}{F_{X_i}(T)}) \\
\end{align*}}%
where the first inequality holds because 
$F_{X_1}(T) \leq F_{X_1'}(T) - \varepsilon_1$, and
the second inequality holds because 
$1 \geq F_{X_1}(T) > 0$ and the second product is positive
and not greater than 1. Since we also have 
that $F_{X_i'}(T) - F_{X_i}(T) \geq 
\varepsilon_i \geq 0$ for every $i$, these steps can be repeated, for $n\geq i \geq 2$, to get the expression: 
$1-\prod_{i=1}^n  (1-\varepsilon_i)$ as claimed. 
Due to monotonicity of max,
we also have $F_{X'}(T) - F_{X}(T) \geq 0$, which
completes the proof.
\end{proof}

\begin{example}\label{expl:parallel}
Let $0 {\leq} \varepsilon {<}1$, 
the random variable $X$ defined by
$$
P_{X}(x) =\begin{cases}
1-\varepsilon  & \text{if } x=0, \\
\varepsilon     & \text{if } x=1, \\
0 & \text{otherwise}
\end{cases}
$$
The "trimmed" version of $X$ in respect to $\varepsilon$, denoted by $X'$, is:
$$
P_{X'}(x) =\begin{cases}
1  & \text{if } x=0, \\
0 & \text{otherwise}
\end{cases}
$$
Let $X_1, \dots, X_n$ be $n$ independent copies of $X_1$ and let $X'_1, \dots, X'_n$ be their ``trimmed versions''. Then:
{
\begin{align*} 
&F_{\max\{X_1',\dots, X_n'\}}( 0) - F_{\max\{X_1,\dots, X_n\}} (0)\\
&=\prod_{i=1}^n (1)-\prod_{i=1}^n (1-\varepsilon)= 1-\prod_{i=1}^n (1-\varepsilon)
\end{align*}}
If we consider now a task tree with a single parallel aggregation level whose children are $n$ 
sequence nodes where the $i$th sequence node has a single primitive-task child modeled by $X_i$, we get that our
computation will introduce exactly the higher bound predicted by Lemma~\ref{tightPalTheorem}, i.e, this bound is tight.
\end{example}

\section{Task trees: mixed sequence/parallel}

Given a task tree $\tau$ and a accuracy requirement $0<\varepsilon<1$, we generate a distribution for a random variable $X_{\tau}'$ approximating the true duration distribution  $X_{\tau}$ for the task tree. 
We introduce the algorithm and prove that the algorithm indeed returns an $\varepsilon$-approximation of the completion time of the plan. 
For a node $v$, let $\tau_v $ be the sub tree with $v$ as root and let $\ch(v)$ be the set of children of $v$.
We use the notation $|\tau|$ to denote the total number of nodes in $\tau$.

\begin{algorithm}
  \DontPrintSemicolon
  \SetKwFunction{Network}{Network} 
  \SetKwFunction{Sequence}{Sequence}
  \SetKwFunction{Parallel}{Parallel}
  
  
  {

 Let $v$ be the root of $\tau$  // Hence, $\tau_v=\tau$ \;
	$n_v = |\ch(v)|$ \;
  \If{$v$ is a $\operatorname{Primitive}$ node}{
	\Return the distribution of $v$\;
			}
	
  \If{$v$  is a $\operatorname{Sequence}$ node}{
  	\For{$c \in \ch(v)$} {
	    $X'_c$ = \Network($\tau_c$, $\frac{|\tau_c|\varepsilon}{|\tau_v|}$)\;
	 }
	 \Return \Sequence($\{X'_c\}_{c \in \ch(v)}$, $\frac{\varepsilon}{n_v|\tau_v|}$)	
	 
	}
			
	\If{$v$ is a $\operatorname{Parallel}$ node}{
	 	\For{$c \in \ch(v)$} {
			$X'_c$ = \Network($\tau_c$, $\min(\frac{|\tau_{c}|\varepsilon }{|\tau_v|}   ,\frac{1}{n_v (|\tau_v| n_v+1)})$)\;
		}
		\Return \Parallel($\{X'_c\}_{c \in \ch(v)}$)
	}	
  }
  
\caption{Network($\tau$, $\varepsilon$)}\label{alg:approx}
  
\end{algorithm}


Algorithm~2, that implements the operator $\Network$, is a straightforward postorder traversal of the task tree. The only remaining issue is handling the error,  in an amortized approach, as seen in the proof of the following theorem.

\begin{theorem}\label{th:TNapprox}
Given a task tree $\tau$, let $X_{\tau}$ be a random variable representing the true
distribution of the completion time for the network. Then $\Network (\tau , \varepsilon)\succeq_\varepsilon X_{\tau}$.
\end{theorem}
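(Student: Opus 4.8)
The plan is to prove this by structural induction on the task tree $\tau$, using the error-propagation lemmas already established for the two composite node types. The induction hypothesis is that for every node $v$, the recursive call $\Network(\tau_v, \varepsilon_v)$ returns a r.v.\ $X'_v \approx_{\varepsilon_v} X_{\tau_v}$, where $\varepsilon_v$ is the accuracy parameter passed in by the parent. The key bookkeeping observation driving the whole argument is that the accuracy parameter handed down to a child $c$ is a $|\tau_c|/|\tau_v|$ fraction of the parent's budget, so the total budget is partitioned across the subtree in proportion to subtree sizes, leaving exactly one ``unit'' of budget, scaled appropriately, to pay for the $\Trim$/convolution error incurred at $v$ itself.

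First I would dispose of the base case: if $v$ is a primitive node, $\Network$ returns the exact distribution, so $X'_v = X_{\tau_v}$ and trivially $X'_v \approx_0 X_{\tau_v} \approx_{\varepsilon_v} X_{\tau_v}$. Next, the sequence case. By the induction hypothesis, each child call returns $X_c \approx_{|\tau_c|\varepsilon/|\tau_v|} X_{\tau_c}$. We then invoke $\Sequence(\{X_c\}_{c\in child_v}, \frac{\varepsilon}{n_v|\tau_v|})$. Applying Theorem~\ref{appSeqTheorem} with $\varepsilon_c = |\tau_c|\varepsilon/|\tau_v|$ and trimming parameter $\frac{\varepsilon}{n_v|\tau_v|}$, the resulting error is
$$
\sum_{c\in child_v}\frac{|\tau_c|\varepsilon}{|\tau_v|} + n_v\cdot\frac{\varepsilon}{n_v|\tau_v|}
= \frac{\varepsilon}{|\tau_v|}\Bigl(\sum_{c\in child_v}|\tau_c| + 1\Bigr)
= \frac{\varepsilon}{|\tau_v|}\cdot|\tau_v| = \varepsilon,
$$
using $|\tau_v| = 1 + \sum_{c\in child_v}|\tau_c|$. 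So $X'_v \approx_\varepsilon X_{\tau_v}$, and more generally with parent budget $\varepsilon_v$ one gets $\approx_{\varepsilon_v}$ by the same computation with $\varepsilon_v$ in place of $\varepsilon$ — actually the cleanest framing is to phrase the whole induction with a generic budget parameter, so that the root call with $\varepsilon$ is just the special case.

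For the parallel case, the child $c$ is called with parameter $\min\bigl(\frac{|\tau_c|\varepsilon}{|\tau_v|}, \frac{1}{n_v(|\tau_v|n_v+1)}\bigr)$; the second term in the $\min$ is exactly the hypothesis $\varepsilon_i \le \frac{1}{n(Kn+1)}$ required by Lemma~\ref{appPalTheorem} with $n = n_v$ and $K = |\tau_v|$. Since $\Parallel$ does no trimming, we apply Lemma~\ref{appPalTheorem} with $\varepsilon := |\tau_v|\varepsilon/|\tau_v|\cdot(\text{something})$ — more carefully, set $\varepsilon$ in the lemma to $\frac{|\tau_v|\varepsilon}{|\tau_v|}$; the bound $e = \sum_c \varepsilon_c + \varepsilon/K$ then becomes $\sum_c \frac{|\tau_c|\varepsilon}{|\tau_v|} + \frac{\varepsilon}{|\tau_v|} = \varepsilon$ again by the subtree-size identity. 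The one subtlety to check is that we are entitled to take $\varepsilon := \frac{|\tau_c|\varepsilon}{|\tau_v|}$ as the common bound in Lemma~\ref{appPalTheorem} even when the $\min$ picked the other branch, which is fine since the lemma only needs $\varepsilon \ge \varepsilon_i$, and $\frac{|\tau_c|\varepsilon}{|\tau_v|} \ge \varepsilon_c$ holds by definition of the $\min$.

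I expect the main obstacle to be purely presentational rather than mathematical: getting the indices and the role of the generic budget parameter straight so that the induction is genuinely on subtrees with arbitrary incoming $\varepsilon_v$, and verifying that the two ``leftover unit'' terms ($n_v\cdot\frac{\varepsilon}{n_v|\tau_v|}$ for sequence, $\frac{\varepsilon}{K}=\frac{\varepsilon}{|\tau_v|}$ for parallel) are each exactly the $\frac{\varepsilon}{|\tau_v|}$ needed to make $1 + \sum_c|\tau_c| = |\tau_v|$ close the telescoping. A minor point worth a sentence is confirming the side condition of Lemma~\ref{appPalTheorem} is met: we need the parameter actually passed to satisfy $0 \le \varepsilon_c \le \frac{1}{n_v(|\tau_v|n_v+1)}$, which holds because it is a $\min$ with exactly that quantity.
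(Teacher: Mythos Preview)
Your proposal is correct and follows essentially the same approach as the paper: structural induction on the tree, with the base case for primitive nodes, Theorem~\ref{appSeqTheorem} for sequence nodes, and Lemma~\ref{appPalTheorem} with $K=|\tau_v|$ for parallel nodes, closing the budget via the identity $|\tau_v|=1+\sum_{c}|\tau_c|$. You are in fact more explicit than the paper about two points the paper leaves implicit: that the induction must be stated for a generic incoming budget $\varepsilon_v$, and that the choice of the lemma's common bound $\varepsilon$ in the parallel case must satisfy both $\varepsilon\ge\varepsilon_c$ (guaranteed by the $\min$) and the side condition $\varepsilon\le\frac{1}{n_v(|\tau_v|n_v+1)}$; your slight hesitation there is warranted, but any $\varepsilon'$ with $\max_c e_c\le\varepsilon'\le\min\bigl(\varepsilon,\frac{1}{n_v(|\tau_v|n_v+1)}\bigr)$ works and yields $\varepsilon'/K\le\varepsilon/|\tau_v|$, which is all you need.
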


\begin{proof} 
By induction on $|\tau|$. 
Base: $|\tau|=1$, the node must be primitive, and $\Network$ will just return the distribution unchanged which is obviously an $\varepsilon$-approximation
of itself. Suppose the claim is true for $1 \leq |\tau| < n$. Let $\tau$ be a task tree of size $n$ and let $v$ be the root of $\tau$. If $v$ is a $\operatorname{Sequence}$ node, by the induction hypothesis that $X'_c\succeq_{|\tau_c|\varepsilon/|\tau_v|} X_{c}$, and by Theorem~\ref{appSeqTheorem}, the maximum accumulated error is 
$\sum_{c \in \ch(v)}
{|\tau_c|\varepsilon}/|\tau_v| + {\varepsilon}/|\tau_v|$ = $(n-1)\varepsilon/|\tau_v|+{\varepsilon}/|\tau_v|= \varepsilon$ for $v$, therefore, $\Sequence(\{X'_c\}_{c\in \ch(v)}, {\varepsilon}/{n}) \succeq_\varepsilon X_\tau$ as required. 
If $v$ is a $\operatorname{Parallel}$ node, by the induction hypothesis that $X'_c  \succeq_{e_c} X_{c}$, where $e_c = \min(\frac{|\tau_{c}|\varepsilon}{|\tau_v|}, \frac{1}{n_v (|\tau_v| n_v+1)})$. 
 
\noindent So $\sum_{c\in \ch(v)} e_c \leq   \sum_{c\in \ch(v)} \frac{|\tau_{c}|\varepsilon}{|\tau_v|}  \leq \varepsilon  - \varepsilon/ |\tau_v|  $. Then, by Lemma~\ref{appPalTheorem}, using $K=|\tau_v|$ and $n=n_v$, we get that $\Parallel(\{X'_c\}_{c\in \ch(v)})\succeq_\varepsilon  X_\tau$ as required. 
\end{proof}

\begin{theorem}\label{th:TTalgcomplexity}
Let  $N$ be the size of the  task tree  $\tau$,  and $M$ the size of the maximal support of each of the primitive tasks.
If $0 \leq \varepsilon \leq \frac{1}{N (N^2+1)}$ and $M < N/\varepsilon$, the  $\Network$ approximation algorithm 
runs in time $O((N^5 /\varepsilon^2)\log(N^3/\varepsilon^2))$, using
$O(N^3/\varepsilon^2) $ memory.
\end{theorem}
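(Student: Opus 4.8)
The plan is to bound the algorithm's running time and space by summing per-node costs over the at most $N$ internal nodes of $\tau$, invoking Theorem~\ref{appSeqComplexTheorem} at each sequence node and the $O(nm\log n)$ estimate noted for $\operatorname{Parallel}$ at each parallel node. To apply those estimates I need two ingredients: a lower bound on the accuracy parameter with which each node is processed, and an upper bound on the size of every distribution the algorithm ever stores.

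First I would track the accuracy parameter down the recursion in Algorithm~2. A short induction on depth shows that the parameter passed to $\operatorname{Network}$ on a subtree $\tau_v$ is always at least $|\tau_v|\varepsilon/N$: along a sequence edge the parameter is multiplied by exactly $|\tau_c|/|\tau_v|$, so the factors telescope along any root-to-node path, and along a parallel edge the additional $\min$ with $\tfrac{1}{n_v(|\tau_v|n_v+1)}$ can only raise the value, since $\tfrac{1}{n_v(|\tau_v|n_v+1)}\ge\tfrac1{N(N^2+1)}\ge\varepsilon\ge|\tau_c|\varepsilon/N$ under the stated hypothesis on $\varepsilon$. Consequently the $\Trim$ tolerance used inside the $\operatorname{Sequence}$ call at a sequence node $v$ is $\ge\tfrac{\varepsilon}{n_vN}\ge\tfrac{\varepsilon}{N^2}$, and every child of a parallel node is processed with a parameter $\ge|\tau_c|\varepsilon/N$ as well.

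Second I would prove by structural induction that $|\support(X_v)|\le |\tau_v|\,N/\varepsilon$ for every node $v$. Leaves contribute $M<N/\varepsilon$; the output of a sequence node is a $\Trim$ output, so Lemma~\ref{SizeD} bounds its support by the reciprocal of the tolerance, i.e.\ by $n_vN/\varepsilon<|\tau_v|N/\varepsilon$; a parallel node performs no trimming, but the maximum of discrete r.v.s is supported on the union of the individual supports, so its output support is at most $\sum_{c\in child_v}|\support(X_c)|$, and the inductive estimates add up because $\sum_{c}|\tau_c|=|\tau_v|-1$. Hence every stored distribution has support $O(N^2/\varepsilon)$, and the transient buffer formed inside a $\operatorname{Sequence}$ call --- the convolution of the trimmed running distribution (size $\le n_vN/\varepsilon$) with one child distribution (size $\le|\tau_{c}|N/\varepsilon$) --- is controlled by the product of these two bounds. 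Substituting $m=O(N^2/\varepsilon)$ and tolerance $\ge\varepsilon/N^2$ into Theorem~\ref{appSeqComplexTheorem} (whose precondition $m\le 1/\text{tolerance}$ then holds), bounding each parallel node's work by $O(n_vm\log n_v)$, and summing over all internal nodes using $\sum_v n_v<N$, the sequence work dominates and yields the claimed $O((N^5/\varepsilon^2)\log(N^3/\varepsilon^2))$ running time, while the peak space --- the largest convolution buffer alive at any instant together with the $O(N^2/\varepsilon)$ worth of child distributions held along the current recursion branch --- gives the claimed $O(N^3/\varepsilon^2)$ memory. Here the hypothesis $M<N/\varepsilon$ is exactly what lets the $M$ terms be absorbed into the $N/\varepsilon$ terms throughout, and $\varepsilon\le\tfrac1{N(N^2+1)}$ is what keeps the $\min$ in the parallel branch from dominating the parameter bookkeeping.

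The step I expect to be the main obstacle is the support-size bound of the third paragraph, and with it the control of the transient convolution buffer: parallel nodes never trim, so their output size is governed only by a union bound, and one must check that the per-child estimates telescope against subtree sizes rather than compounding multiplicatively through nested parallel nodes, and that the buffer produced by convolving two already-trimmed distributions inside $\operatorname{Sequence}$ stays within the stated polynomial budget (pinning down the exact exponent there is the delicate part). Once those size bounds are in hand, the remainder is a routine summation of the per-operator bounds over the $\le N$ nodes of the tree.
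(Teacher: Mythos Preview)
Your plan is essentially the same as the paper's: bound the trim tolerance from below along the recursion, bound the support sizes from above by structural induction, plug into Theorem~\ref{appSeqComplexTheorem} at each sequence node and the $O(nm\log n)$ estimate at each parallel node, then sum over the $\le N$ internal nodes. The paper's own proof is a very compressed version of exactly this---it simply asserts that the trim parameter is of order $\varepsilon/N$ and that the inputs to $\operatorname{Sequence}$ have support $O(N^2/\varepsilon)$, then quotes Theorem~\ref{appSeqComplexTheorem} once and multiplies by $N$---so your write-up is, if anything, more carefully argued than the original.

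One small arithmetic point worth tightening: when you substitute $m=O(N^2/\varepsilon)$ and the loose tolerance bound $\varepsilon/N^2$ into Theorem~\ref{appSeqComplexTheorem}, you actually get $m/\text{tolerance}=O(N^4/\varepsilon^2)$, not $O(N^3/\varepsilon^2)$. To land on the paper's stated $O(N^3/\varepsilon^2)$ space you need the sharper tolerance bound $\varepsilon/(n_vN)$ together with the observation that the large-support children (those with $|\tau_c|$ close to $|\tau_v|$) only occur when $n_v$ is small, so that the product $n_v\cdot|\tau_c|$ stays $O(N)$ rather than $O(N^2)$; you hint at this interplay in your last paragraph but do not yet cash it in. This is precisely the ``delicate part'' you flag, and it is also glossed over in the paper's sketch.
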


\begin{proof}
The run-time and space bounds can be derived from the bounds on $\Sequence$ and on $\Parallel$, as follows. In the $\Network$ algorithm, the trim accuracy parameter is less than or equal to ${\varepsilon}/{N}$. 
The support size (called $m$ in Theorem~\ref{appSeqComplexTheorem}) of the variables input to $\Sequence$ are $O(N^2/\varepsilon)$.
Therefore, the complexity of the $\Sequence$ algorithm is 
$O((N^4 /\varepsilon^2)\log(N^3/\varepsilon^2))$ and the complexity of the $\Parallel$ operator is $O((N^3/\varepsilon)\log(N))$. The time and space for sequence dominate, so the total time complexity is $N$ times the complexity of $\Sequence$ and the space complexity is that of $\Sequence$.
\end{proof}

If the constraining assumptions on $M$ and $\varepsilon$ in Theorem~\ref{th:TTalgcomplexity} are lifted, the complexity is still polynomial:
replace one instance of $1/\varepsilon$ by $max(m,1/\varepsilon)$, and the other by $max(1/\varepsilon,N (N^2+1))$
in the runtime complexity expression.

\section{Tightening the error estimation}\label{Chap:Bound}

Until now, we presented an approximation algorithm (\Network) and provided a bounds for time-accuracy trade-offs. In some cases, however, our algorithm provides results that are much more accurate than promised. This may be wasteful, because this extra precision comes with a price of runtime.

In this section we propose a tighter analysis. The term tight here means that we provide the smallest error estimation possible for a given tree structure. In other words, given a task tree, we provide the smallest error bound that is true for any choice of the leaves, i.e., the random variables.

Consider, as an extreme example, a simple task tree with a sequence node at the root and below it only parallel nodes. The total error of our approximation algorithm (\Network) in this case is only due to the invocation of \Sequence at the tail of the recursion. The \Network algorithm, however,  acts as if all the other nodes add additional errors. Eventually, the \Sequence
computation produces a very small error which takes the toll of an unnecessary computation time. 

We will present now a recursive approximation algorithm with a tighter bound on the error parameter for every sequence node. Specifically, we propose the generalized algorithm $\GenNetwork$ (listed as Algorithm~\ref{alg:GenNetwork} below) whose main property is given using the algorithm $\EstimateError$ (listed as Algorithm~\ref{alg:EstimateError} below) as follows:


\begin{algorithm}
  \DontPrintSemicolon
  \SetKwFunction{Sequence}{Sequence}
  \SetKwFunction{Parallel}{Parallel}
  \SetKwFunction{EstimateError}{EstimateError}
  {

 Let $v$ be the root of $\tau$\;
  \If{$v$ is a $\operatorname{Primitive}$ node}{
	\Return $0$ \;
			}
	
  \If{$v$  is a $\operatorname{Sequence}$ node}{
  	\For{$c \in \ch(v)$} {
	    $e_c=$ \EstimateError($\tau_c$, $\varepsilon$)\;
	 }
	 \Return $\varepsilon(v) + \sum_{c\in \ch(v)}e_{c}$	
	}
	\If{$v$ is a $\operatorname{Parallel}$ node}{
	 	\For{$c \in \ch(v)$} {
	    $e_c=$ \EstimateError($\tau_c$, $\varepsilon$)\;
	 }
		\Return $1-\prod_{c\in \ch(v)} (1-e_{c})$
	}	
  }
  
\caption{EstimateError($\tau$, $\varepsilon$) }
\label{alg:EstimateError}
  
\end{algorithm}


\begin{algorithm}
  \DontPrintSemicolon
  \SetKwFunction{Network}{Network} 
  \SetKwFunction{Sequence}{Sequence}
  \SetKwFunction{Parallel}{Parallel}
  
  
  {

 Let $v$ be the root of $\tau$\;
  \If{$v$ is a $\operatorname{Primitive}$ node}{
	\Return the distribution of $v$\;
			}
	
  \If{$v$  is a $\operatorname{Sequence}$ node}{
  	\For{$c \in \ch(v)$} {
	    $X'_c = \GenNetwork(\tau_c, \varepsilon)$\;
	 }
	 \Return \Sequence($\{X'_c\}_{c \in \ch(v)}$, $\varepsilon(v)$)	
	 
	}
			
	\If{$v$ is a $\operatorname{Parallel}$ node}{
	 	\For{$c \in \ch(v)$} {
			$X'_c = \GenNetwork(\tau_c, \varepsilon)$\;
		}
		\Return \Parallel($\{X'_c\}_{c \in \ch(v)}$)
	}	
  }
  
\caption{ GenNetwork($\tau$, $\varepsilon$)}
\label{alg:GenNetwork}
  
\end{algorithm}

\begin{theorem}\label{theorem:approxEps0}
	For a task tree $\tau$ and a function $\varepsilon$ that maps the sequence nodes in $\tau$ to numbers in $[0,1]$, let 
	$\varepsilon_0=\EstimateError(\tau,\varepsilon)$, and let $X_{\tau}$ be a random variable for the true
	distribution of the completion time of $\tau$; then, $\GenNetwork(\tau, \varepsilon) \succeq_{\varepsilon_0} X_{\tau}$. 
\end{theorem}

\begin{proof}
By induction over the depth of $\tau$, denoted by $d$. Base: $ d=1$, a single primitive node $p$. 
In this case, by line 3 of Algorithm~\ref{alg:GenNetwork}, $\GenNetwork(\tau_p,\varepsilon)=X_p$ and, by line 3 of Algorithm~\ref{alg:EstimateError},  $\EstimateError(\tau_p,\varepsilon)=0$ and the claim follows.
Induction hypothesis: Assume the lemma is true for a task tree with depth $ <d $. Step: let $r$ be the root of the tree whose depth is $d$, i.e., its child subtrees are of depth smaller than $d$.

If $r$ is a sequence node then,	by Theorem~\ref{appSeqTheorem}, sequence nodes produce an error which is the sum of all child nodes errors and an additional $\varepsilon(r)$ caused by the trim operator $$\varepsilon_0={\varepsilon(r)+\sum_{c \in \ch(r)}{\EstimateError(\tau_c,\varepsilon)}}$$ equivalent to \EstimateError line 7. By the induction hypothesis, each of the children sub trees satisfy the lemma and we get that $\GenNetwork(\tau_{r}, \varepsilon) \succeq_{\varepsilon_0} X_\tau$ as required.
	If $r$ is a parallel node then,
	by Lemma~\ref{tightPalTheorem}, $$\varepsilon_0=1-\prod_{c \in \ch(r)}{(1-\EstimateError(\tau_c,\varepsilon))}$$ as in line 11 of \EstimateError. By the induction hypothesis, each of the child subtrees satisfy the lemma and we get that $\GenNetwork(\tau_{r}, \varepsilon) \succeq_{\varepsilon_0} X_\tau$ as required.
\end{proof}

Based on this theorem, we propose the following pseudo-algorithm for computing a tight approximation for the makespan of a task tree, as follows. Use some symbolic mathematical engine, such as Wolfram Mathematica (\url{www.wolfram.com/mathematica/}), to find a function $\varepsilon$ such that $\EstimateError(\tau,\varepsilon)$ is smaller than the approximation that you want to achieve; then, run $\GenNetwork(\tau,\varepsilon)$ with this $\varepsilon$. This, of course, is only a pseudo algorithm because it is just a template, not dictating how to compute $\varepsilon$.
The following lemma establishes, however, that the computation of $\varepsilon$ is feasible, at least in an approximated form, as it involves reversing a polynomial of relatively small degree:

\begin{lemma}\label{lemma:poly}
Given a task tree $\tau$, if we consider $\EstimateError(\tau,\varepsilon)$ as a function of the variables $\{\varepsilon(v)\}_{v\in Seq}$ where $Seq$ is the set of Sequence nodes in $\tau$, we have a polynomial of degree smaller or equal to $|Seq|$.  
\end{lemma}
\begin{proof}
By induction over the depth of $\tau$ denoted by $d$. Base: if $d=1$ we have a tree with a single primitive node $p$. In this case  $\EstimateError(\tau,\varepsilon)$ which is indeed a polynomial degree zero.
Induction step: we refer to the children nodes of the root node $r$, $\ch(r)$, as a set of sub task trees, each of them with depth smaller than $d$. If $r$, is a sequence node 
$$\EstimateError(\tau,\varepsilon)=\sum_{c\in \ch(r)}\EstimateError(\tau_c,\varepsilon) +  \varepsilon.$$ 
If $r$, is a parallel node  $$\EstimateError(\tau,\varepsilon)=1-\prod_{c\in \ch(r)} (1-\EstimateError(\tau_c,\varepsilon)).$$ 
By the induction hypothesis we have that the degree of $\EstimateError(c,\varepsilon)$ is the number of sequence node in $c$, for each $c$, and the above equalities show that the degree is increased by one if and only if the root is a parallel node. 
\end{proof}


%
%

To establish the tightness of the proposed pseudo-algorithm, we give now an example of a task tree that cannot be estimated better than what is possible with a perfect instantiation of our template (i.e., a solver that gives the best $\varepsilon$). the terms ``cannot be estimated better'' in the preceding sentence refer only to approximation schemes that use only the structure of the tree, i.e., that are invariant to the choice of the random variables in the leaves as formalized in the next theorem:

\begin{theorem}\label{theorem:tight}
There is a task tree $\tau$, a bound $T$, and an $\varepsilon$ 
such that $F_{\GenNetwork(\tau, \varepsilon)}(T) - F_{X_\tau}(T) = \EstimateError(\tau, \varepsilon)$ where $X_\tau$ is a random variable representing the completion time of $\tau$. 
\end{theorem}
\begin{proof}
See Example~\ref{exp:seq} and Example~\ref{expl:parallel}.
\end{proof}


The above result establishes that $\GenNetwork$ is tight in the sense that there is no $\varepsilon_0 < \GenNetwork(\tau, \varepsilon)$ 
such that $\GenNetwork(\tau, \varepsilon)$ is always $\succeq_{\varepsilon_0}$ than $X_{\tau}$. 
If we look more closely at on the examples used
to prove the theorem, we can say more about the tightness of our algorithm, as follows.
Since the examples consist of task trees whose root nodes are both of
type sequence and of type parallel, we get that any algorithm that traverses the tree
recursively like we do, cannot do better than $\GenNetwork$ in terms of computing a
random variable that is $\succeq_{\varepsilon}$ than $X_{\tau}$ with a smaller $\varepsilon$. 
More formally, if we restrict the discussions to algorithms where the trimming of the variables in a subtree
depend only on the structure of the subtree (not on siblings or parents or the CDFs of the variables), then 
$\GenNetwork$ applies the maximal possible trimming (assuming that we have an optimal solution to $\EstimateError$). 
This means that $\GenNetwork$ is an improvement over 
$\Network$ (Algorithm~\label{alg:approx}) that satisfies this assumption. 
Practically, the improvement is in allowing more trimming and, by that, saving unnecessary computations. 
Specifically, both $\GenNetwork$ and $\Network$ are guaranteed to give a satisfactory answer but they may sometimes compute 
an approximation that is better than required in the price of taking more run-time. $\GenNetwork$ is better in that it
takes this extra time only in cases where any algorithm that decides how to trim based only on the shape of the subtree 
(and not on the CDFs of the random variables or on other parts of the tree) would.

\section{Complexity results}\label{sec:complexity}

The deadline problem is NP-hard, 
even for a task tree consisting only of primitive tasks and one sequence node, i.e.
{\em linear plans}~\cite{Russell:2003:AIM:773294,simmons2001planning,aktolga2004java}. 

\begin{lemma} \label{SumDiscreteRV}
Let $Y=\{Y_1,\dots,Y_n\}$ be a set of discrete real-valued random variables  specified by probability mass functions 
with finite supports, $T \in \mathbb{Z}$, and $p\in[0,1]$. 
Then, deciding  whether $F_{\sum_{i=0}^{n} Y_{i}}(T)>p $ is NP-Hard.
\end{lemma}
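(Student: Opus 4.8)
The plan is to reduce from a known NP-hard problem, and the natural choice is SUBSET-SUM (or PARTITION): given positive integers $a_1,\dots,a_n$ and a target $S$, decide whether some subset sums exactly to $S$. I would encode each $a_i$ by an independent r.v. $Y_i$ taking value $a_i$ with probability $1/2$ and value $0$ with probability $1/2$. Then $\sum_{i=1}^n Y_i$ is the sum over a uniformly random subset, so $Pr(\sum_i Y_i = S) = K/2^n$ where $K$ is the number of subsets summing to $S$. The difficulty is that the lemma asks about $Pr(\sum_i Y_i < T)$, a one-sided tail event, not a point-mass event, so I need to manufacture a threshold question whose answer pins down whether the point probability is nonzero.

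First I would observe that $Pr(\sum_i Y_i < T)$ is a step function of $T$ that jumps exactly at the attainable subset sums, and the jump at value $S$ has height $Pr(\sum_i Y_i = S)$. Hence $Pr(\sum_i Y_i = S) = Pr(\sum_i Y_i < S+1) - Pr(\sum_i Y_i < S)$ (using integrality of all the $a_i$, so no attainable sum lies strictly between $S$ and $S+1$). So a single SUBSET-SUM instance reduces to two threshold queries. To get NP-hardness of the single decision problem $Pr(\sum_i Y_i < T) > p$ I would argue as follows: the quantity $q_0 := Pr(\sum_i Y_i < S)$ is itself of the form $c/2^n$ for an integer $c$ computable... but $c$ is not obviously computable in polynomial time. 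To sidestep this, the cleanest route is a padding trick: add one extra independent r.v. $Y_0$ that, with the other variables, shifts the whole distribution so that the threshold query directly isolates the point mass. Concretely, I would set things up so that the event $\{\sum Y_i < T\}$ becomes, by choice of $T$ relative to the $a_i$ and their total $A=\sum a_i$, exactly the event that the chosen subset sums to at most $S$; combined with a symmetric copy (negating, i.e.\ using $A - \sum Y_i$, which has the same distribution by the $0/a_i$ symmetry), one obtains that $Pr(\sum Y_i \le S) + Pr(\sum Y_i \le A-S-1) = 1 + Pr(\sum Y_i = S)$ when $2S \ge A$, or a similar identity; so a subset summing to $S$ exists iff a suitable threshold probability strictly exceeds $1/2$ after appropriate scaling.

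The cleanest version I would actually write: reduce from PARTITION (target $S = A/2$, assume $A$ even). By the $0/a_i$ symmetry of each $Y_i$, the random variable $W := \sum_i Y_i$ satisfies $W \stackrel{d}{=} A - W$, so its distribution is symmetric about $A/2$. Therefore $Pr(W < A/2) = Pr(W > A/2)$, and since these plus $Pr(W = A/2)$ sum to $1$, we get $Pr(W < A/2) = \tfrac12 - \tfrac12 Pr(W = A/2)$. Now $Pr(W = A/2) = 0$ iff no subset of the $a_i$ sums to $A/2$, i.e.\ iff the PARTITION instance is a "no" instance. Hence: the PARTITION instance is a "no" instance iff $Pr(W < A/2) = 1/2$, and a "yes" instance iff $Pr(W < A/2) < 1/2$. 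Equivalently, setting $T = A/2$ and $p = 1/2 - 1/2^{n+1}$, we have $Pr(W < T) > p$ iff $Pr(W=A/2) < 1/2^n$, which, since $Pr(W=A/2)$ is a multiple of $1/2^n$, happens iff $Pr(W=A/2)=0$, i.e.\ iff the PARTITION instance has no solution. This gives a polynomial-time many-one reduction from the complement of PARTITION; since PARTITION is NP-hard and (being NP-complete) its complement is coNP-hard, I would instead phrase the target inequality as $Pr(W < T) < 1/2$, or equivalently ask the decision problem about $Pr(W \le T') > p'$ with $T' = A/2 - 1$ and the symmetric complement, to land on PARTITION itself rather than its complement — a minor bookkeeping choice. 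All numbers ($T$, $p$, the supports) have polynomial bit-length, and the r.v.s are discrete with finite support of size $2$, so the reduction is valid.

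The main obstacle is purely the one-sidedness: translating a tail-probability threshold into the point-mass (feasibility) question without needing to compute an a priori unknown partial sum. The symmetry of the $0/a_i$ indicator variables is exactly what resolves this — it makes $Pr(W < A/2)$ determined by $Pr(W = A/2)$ alone — so the key step is recognizing and exploiting that symmetry. A secondary, routine point to get right is the direction of the reduction (to PARTITION vs.\ its complement) and the exact choice of the rational $p$ so that the strict inequality in the problem statement exactly captures "the point mass is zero"; this is handled by noting $Pr(W=A/2) \in \{0, 1/2^n, 2/2^n, \dots\}$ and choosing $p$ strictly between $0$ and $1/2^n$ after the appropriate rescaling.
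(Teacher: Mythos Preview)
Your approach is genuinely different from the paper's. The paper reduces from \textsc{SubsetSum} by a \emph{Turing} reduction: having built the same $0/a_i$ fair-coin variables, it observes that all probabilities are integer multiples of $1/2^n$ and then uses $O(n)$ oracle calls to the decision problem (binary search over $p$) to recover $Pr(\sum Y_i < T)$ exactly, repeats for $T+1$, and compares. Your route via \textsc{Partition} and the distributional symmetry $W \stackrel{d}{=} A-W$ is more elegant and aims at a \emph{many-one} reduction, which is formally stronger; the paper's binary-search trick, on the other hand, needs no special structure of the target value.

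That said, your ``minor bookkeeping'' step is not right as written. With $T=A/2$ you get $Pr(W<T)>p$ exactly when \textsc{Partition} is a \emph{no} instance, so you have reduced from co-\textsc{Partition} and shown only coNP-hardness. Neither of your proposed fixes works: rewriting the target as ``$Pr(W<T)<1/2$'' changes the problem being analyzed, and taking $T'=A/2-1$ with $\le$ is the same event $\{W<A/2\}$ again. The correct fix is to shift the threshold up by one. Since $W$ is integer-valued and $A$ is even,
\[
Pr\bigl(W < \tfrac{A}{2}+1\bigr) \;=\; Pr\bigl(W \le \tfrac{A}{2}\bigr) \;=\; \tfrac12 + \tfrac12\,Pr\bigl(W=\tfrac{A}{2}\bigr),
\]
so with $T=A/2+1\in\mathbb{Z}$ and $p=1/2$ one has $Pr(W<T)>p$ iff $Pr(W=A/2)>0$ iff the \textsc{Partition} instance is a \emph{yes} instance. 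That is the many-one reduction you want.
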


This lemma was first proved in~\cite{mohring2001scheduling} by a reduction from the \textit{Partition} problem~\cite[problem number SP]{Garey:1990:CIG:574848} and also shown in~\cite{cohen2015estimating}, by reduction from the \textit{SubsetSum} problem~\cite[problem number SP13]{Garey:1990:CIG:574848}.
 
\begin{theorem}
Deciding if the probability that a task tree satisfies a deadline $T$ is above a threshold $p$ is NP-hard.
\end{theorem}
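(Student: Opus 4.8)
The plan is to derive this as an immediate corollary of Lemma~\ref{SumDiscreteRV} via a trivial embedding of a sum of independent discrete r.v.s into a task tree. Given r.v.s $Y_1,\dots,Y_n$ with finite integer supports and a target $T$, I would construct the task tree $\tau$ whose root is a single sequence node with $n$ primitive children, the $i$-th child carrying the distribution of $Y_i$. By the semantics of a sequence node (each subtask starts as its predecessor terminates, and the node terminates with its last subtask), the makespan of $\tau$ equals $\sum_{i=1}^n Y_i$ exactly, so $Pr(\tau \text{ terminates by time } T) = Pr(\sum_{i=1}^n Y_i \le T)$. This $\tau$ is a legal instance of the paper's model — one sequence node, primitive leaves, discrete finite-support independent distributions — so the hardness will in fact hold already for linear plans, as announced before Lemma~\ref{SumDiscreteRV}.

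Next I would connect this to the NP-hard predicate of Lemma~\ref{SumDiscreteRV}. Since the $Y_i$ are integer-valued, $Pr(\sum_i Y_i < T) = Pr(\sum_i Y_i \le T-1)$, so an algorithm that, on deadline $T-1$, computes the deadline probability for $\tau$ returns precisely $Pr(\sum_i Y_i < T)$; a single comparison of this value with the rational $p$ then decides the predicate shown NP-hard in Lemma~\ref{SumDiscreteRV}. Equivalently, one can invoke the \textit{SubsetSum} construction of that lemma directly: take two-valued r.v.s $Y_i$ with $Pr(Y_i{=}s_i){=}Pr(Y_i{=}0){=}1/2$, place them under one sequence node, and recover whether $Pr(\sum_i Y_i = T) > 0$ — hence the \textit{SubsetSum} answer — from a constant number of deadline-probability evaluations combined with the binary-search argument already given in the proof of Lemma~\ref{SumDiscreteRV}. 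In either case, a polynomial-time algorithm for the deadline problem would yield one for an NP-hard problem, so the deadline problem is NP-hard (and a polynomial-time algorithm would imply P$=$NP).

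There is essentially no real obstacle here; the only points meriting a word of care are bookkeeping. First, the deadline problem asks to compute a probability while Lemma~\ref{SumDiscreteRV} is a decision problem, so the reduction is a Turing (oracle) reduction — an exact value of $Pr(t \le T)$ trivially answers the threshold question. Second, the strict-versus-nonstrict form of the deadline is immaterial because all supports are integral, so shifting $T$ by one interchanges $<$ and $\le$. Third, one should remark that the constructed distributions are exactly the kind the model permits, so no loss of generality arises from restricting to this simple class of trees.
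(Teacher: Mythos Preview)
Your proposal is correct and is essentially identical to the paper's proof: both embed the $Y_i$ as primitive children of a single sequence node so that the makespan equals $\sum_i Y_i$, and then invoke Lemma~\ref{SumDiscreteRV}. Your additional remarks on the strict-versus-nonstrict deadline and the Turing nature of the reduction are sound bookkeeping that the paper leaves implicit.
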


\begin{proof} The makespan of task tree consisting of a single sequence node with $n$ leaf nodes is the sum of $n$ random variables (the completion times of the leaves). Therefore, the theorem follows immediately from Lemma~\ref{SumDiscreteRV}. 
\end{proof}

Our next goal is to show that not only that the above decision problem is NP-hard but also to analyze the hardness of computing the exact probability $F_{\sum_{i=0}^{n} Y_{i}}(T)$. To this end, we note that 
if $P_{Y_i}(y)=1/|\support(Y_i)|$ for every $i$ and every $y\in\support(Y_i)$, computing the probability $F_{\sum_{i=0}^{n} Y_{i}}(T)$ is equivalent, up to scaling by $\prod_{i=1}^n |\support(Y_i)|$, to counting the number of assignments to the random variables  such that $\sum_{i=0}^{n} Y_{i} \leq T$. 

\begin{definition} 
	Given the random variable $Y_1,\dots,Y_n$ and a deadline $T$,
	the \#deadline-probability counting problem is to count the number of assignments to the random variables such that $\sum_{i=0}^{n} Y_{i} \leq T$.
\end{definition}

It is easy to see that \#deadline-probability is in \#P because we can check if an assignment satisfies $\sum_{i=0}^{n} Y_{i} \leq T$ in linear time.

We will show that \#deadline-probability is \#P-complete by providing a reduction from \#knapsack. Recall the definition of $\#\operatorname{knapsack}$~\cite{arora2009computational}: We are given $n$ objects, and together with each object $i\in \{1,\dots , n\}$ we have its integer weight $w_i$, and the total weight $W$ our knapsack can hold. Our objective is to find the number of subsets $K \subseteq \{1, \dots , n\}$ such that $\sum_{i \in K} w_i \leq W$. We call the sets satisfying this weight constraint feasible, and denote them as $S$. Thus, the \#knapsack problem is to compute $|S|$.

\begin{theorem} \label{deadlinSP}
\#deadline-probability is \#P-complete.
\end{theorem}

\begin{proof} By reduction from \#knapsack. Given an instance of \#knapsack, create the two-valued random variables $Y_{1},\dots,Y_{n} $ with $P_{Y_i}(w_i)=1/2$ and $P_{Y_i}(0)=1/2$ and choose $T=W$. By construction, $|S|=2^n \cdot F_{\sum_{i=1}^{n} Y_{i}}(T)$ (where $S$ is as explained above). Since, every assignment is chosen with probability $1/2^n$, we get that $|S|$ is the number of assignments such that $\sum_{i=1}^{n} Y_{i} \leq T$. Thus, if we could count the number of such assignments, we could also count the size of $S$. This establishes that the problem is \#P-complete. 
\end{proof}

Finally, we consider the linear utility function, i.e. the problem of computing an expected makespan of a task network.
Note that although for linear plans the {\em deadline problem} is NP-hard, the {\em expectation problem} is trivial because the expectation
of the sum of random variables  $X_i$ is equal to the sum of the expectations of the $X_i$s.
For {\em parallel nodes}, it is easy to compute the CDF and therefore also easy to
compute the expected value.
Despite that, for task networks consisting of {\em  both} sequence nodes and
parallel nodes, these methods cannot be effectively combined, and in fact, we have:
\begin{theorem}
Computing the expected completion time of a task network is NP-hard.
\end{theorem}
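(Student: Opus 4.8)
The plan is to give a Turing reduction from \textit{SubsetSum}, reusing the gadget from the proof of Lemma~\ref{SumDiscreteRV} but wrapping it inside a single parallel node so that an expectation query becomes a query about a tail probability of a sum. Given a \textit{SubsetSum} instance $(S=\{s_1,\dots,s_n\},T)$ with $T\in\mathbb{Z}$, I would build, for a parameter $c\in\mathbb{Z}$, the task tree $\tau_c$ whose root is a parallel node with two children: a sequence node over $n$ primitive leaves carrying the two-valued r.v.s $Y_1,\dots,Y_n$ with $Pr(Y_i{=}s_i){=}Pr(Y_i{=}0){=}1/2$, and a single primitive leaf that is deterministically equal to $c$. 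This is a legal task tree in the sense of Section~\ref{sec:formal}, it contains both a sequence node and a parallel node, and it has size polynomial in the input. Writing $S=\sum_{i=1}^n Y_i$, the completion time of $\tau_c$ is the integer-valued r.v. $X_{\tau_c}=\max(S,c)$.

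The key identity is that for every integer $c$,
\[
E[\max(S,c)] - E[\max(S,c-1)] = \sum_t \bigl(\max(t,c)-\max(t,c-1)\bigr)\,Pr(S{=}t) = Pr(S\le c-1) = Pr(S<c),
\]
since the bracketed term equals $1$ when $t\le c-1$ and $0$ otherwise, using that $S$ is integer-valued. Hence an oracle that computes the exact expected completion time of a task network, invoked on $\tau_{T-1}$, $\tau_{T}$, and $\tau_{T+1}$, yields both $Pr(S<T)$ and $Pr(S\le T)$, and therefore $Pr(S{=}T)=Pr(S\le T)-Pr(S<T)$. By the construction of the $Y_i$ (exactly as in Lemma~\ref{SumDiscreteRV}), $Pr(S{=}T)>0$ if and only if some subset of $S$ sums to $T$; so three calls to the oracle decide the \textit{SubsetSum} instance in polynomial time, and computing the expected completion time of a task network is therefore NP-hard.

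The construction itself is routine; the only points needing care are (i) checking that a degenerate primitive leaf of constant duration $c$, and a parallel node whose child is a sequence node, are both permitted by the model, so that $\tau_c$ is a bona fide task tree; and (ii) ensuring all r.v.s stay integer-valued with finite support, which is what makes the telescoping difference collapse to an indicator and keeps the expectations finite. I do not expect a genuine obstacle here: the difficulty is conceptual --- recognizing that taking the maximum against a single constant converts an expectation query into a CDF query, so that NP-hardness of the deadline problem for linear plans propagates to the expectation problem for mixed trees --- rather than technical.
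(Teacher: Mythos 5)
Your proposal is correct and is essentially the paper's own proof: the same reduction from \textit{SubsetSum} via a parallel node whose two children are the sequence-node gadget of Lemma~\ref{SumDiscreteRV} and a deterministic primitive leaf, with differences of expectation queries recovering a CDF value of the sum. The only cosmetic difference is that you difference over the integer constants $T-1$, $T$, $T+1$, whereas the paper uses the fractional deadlines $T+\tfrac14$ and $T+\tfrac12$ to extract $Pr(X\le T)$; both yield $Pr(X=T)$ with a constant number of oracle calls.
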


\begin{proof}
By reduction from subset sum, defined as:
given a set $S=\{s_{1},\dots,s_{n}\}$ of integers, and integer target value $T$, is there a subset of $S$ whose sum is exactly $T$? 
Given an instance of \textit{SubsetSum}, create
the two-valued random variables  $Y_{1},\dots,Y_{n} $ with $P_{Y_i}(s_i)=1/2$ and $P_{Y_i}(0)=1/2$. 
By construction, there exists a subset of $S$ summing 
to $T$ if and only if $P_{\sum_{i=0}^{n} Y_{i}}(T)>0$.
Construct random variables  (``primitive tasks'') $Y_i$ .
Denote by $X$ the random variable $\sum_{i=1}^{n} Y_{i}$. Construct one parallel node
with two children, one being the a sequence node having the
completion time distribution
defined by $X$, the other being a primitive task that has a completion time
$T_j$ with probability 1. (We will use more than one such case,
which differ only in the value of $T_j$, hence the subscript $j$).
Denote by $M_j$ the random variable that represents the completion time
distribution of the parallel node, using this
construction, with the respective $T_j$. Now consider computing the
expectation of the $M_j$ for the following cases: $T_1 = T+{1}/{2}$
and $T_2 = T+{1}/{4}$.
Thus we have, for $j\in\{ 1, 2\}$, by construction and the definition of
expectation:
{
\begin{eqnarray*}
E[M_i] &=& T_j F_{X}( T_j) + \sum_{x > T_j} x~P_{X}(x)  \\
          &=& T_j F_{X}( T)+ \sum_{x \geq T+1} x~P_{X}(x)
\end{eqnarray*}}
where the second equality follows from the $Y_i$ all being
integer-valued random variables  (and therefore $X$ is also integer valued).
Subtracting these expectations, we have $E[M_1]-E[M_2]=\frac{1}{4}F_{X}( T)$.
Therefore, using the computed expected values, we can compute
$F_{X}( T)$, and thus also $P(X=T)$, in polynomial time.
\end{proof}

To complete the picture, we also state the complexity of the deadline problem  for trees with only parallel nodes:

\begin{lemma} \label{TPar}
	Let $Y=\{Y_1,\dots,Y_n\}$ be a set of independent random variables specified by
	CDF  and let $T \in \mathbb{R}$. Then,  $F_{\max\{Y_1,\dots,Y_n\}}(T)$
	can be computed in polynomial-time.
\end{lemma}
\begin{proof}
	See~\eqref{eq:par}.
\end{proof}

\section{Empirical Evaluation}\label{sec:exp}

We examine our approximation bounds in practice, and compare the results to
exact computation of the CDF and to a simple stochastic sampling scheme. Three types of task trees
are used in this evaluation:
task trees used as execution plans for the ROBIL team entry in the DARPA
robotics challenge (DRC simulation phase, http://in.bgu.ac.il/en/Pages/news/dar\textunderscore pa.aspx), linear plans (seq), and plans for
the Logistics domain (from IPC2 http://ipc.icaps-conference.org/).
The primitive task distributions were uniform
distributions discretized to $M$ values.  
The plans from the DRC are shown in Figures {\ref{fig:drive}}, {\ref{fig:walk}} and {\ref{fig:pickup}}. For every entry of $M$ in Tables~\ref{tab:runtimes1} and ~\ref{tab:runtimes2} each line is the runtime in seconds, and for every entry of $M$ in Tables~\ref{tab:errors1} and ~\ref{tab:errors2} each line presents the estimation error.

\begin{figure}
	\begin{center}
	\includegraphics[width=\textwidth]{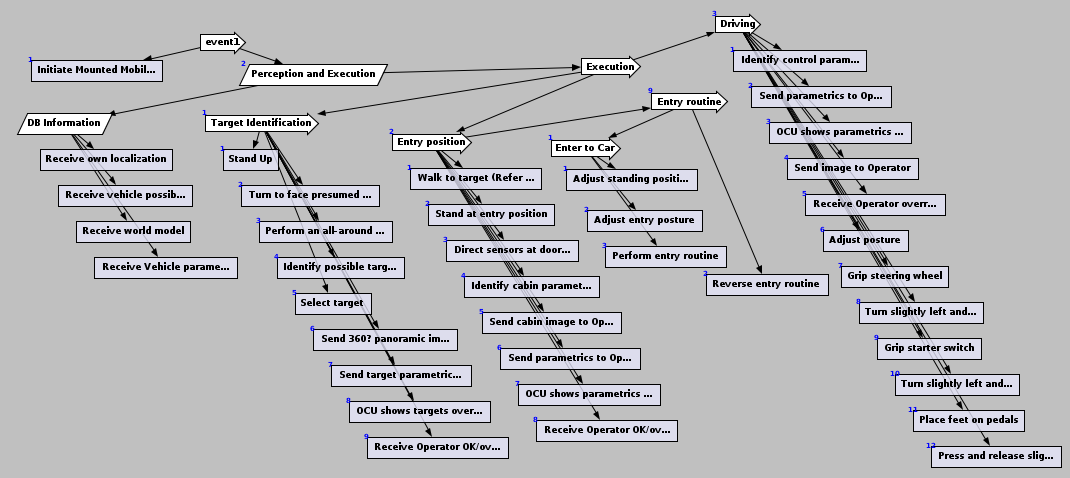}
	\caption{A plan for Drive challenge task, 47 nodes}
	\label{fig:drive}
	\end{center}
\end{figure}

\begin{figure}
	\begin{center}
	\includegraphics[width=\textwidth]{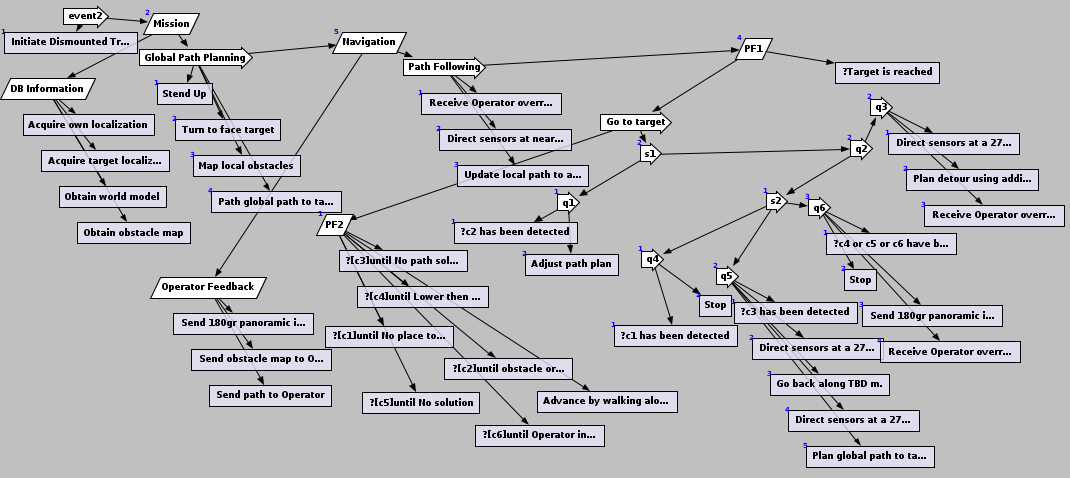}
	\caption{A plan for Walk challenge task, 57 nodes}
	\label{fig:walk}
	\end{center}
\end{figure}

\begin{figure}
	\begin{center}
	\includegraphics[width=\textwidth]{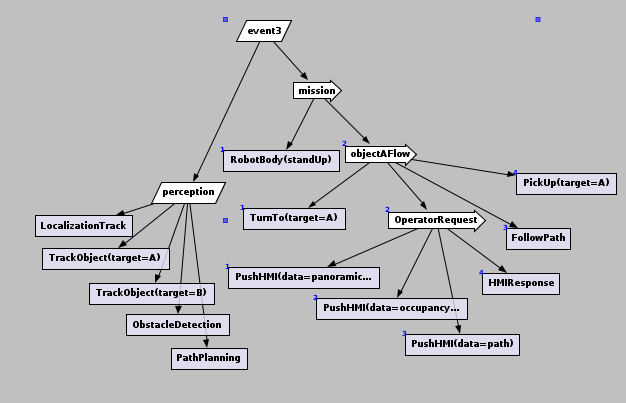}
	\caption{A plan for Pick-Up challenge task}
	\label{fig:pickup}
	\end{center}
\end{figure}

In the Logistics domain, packages are to be transported by trucks or airplanes.
Hierarchical plans were generated by the JSHOP2 planner~\cite{nau2003shop2} for this domain and consisted of one parallel node (packages delivered in parallel),
with children all being sequential plans. In Figure {\ref{fig:transport}} presented a simple plan generated by JSHOP2 algorithm for accomplishing (transport-two p1 p2) from the following initial state: $\{$(package p1), (at p1 l1), (destination p1 l3), (available-truck t1), (at t1 home),
(package p2), (at p2 l2), (destination p2 l3), (available-truck t2), (at t2 home)$\}$.
The duration distribution of all primitive tasks is uniform but the support parameters were determined by the type of the task, 
in some tasks the distribution is fixed (such as for load and unload) and in others the distribution depends on the velocity of the vehicle
and on the distance to be travelled.

After running our approximation algorithm we also ran $\Trim_L$ which uses a reversed version of the $\Trim$ operator, providing a {\em lower} bound of the CDF, as well as the upper bound
generated by Algorithm~\ref{alg:approx}. Running both variants allows us to bound the actual error, costing
only a doubling of the run-time. Despite the fact that our error bound is theoretically tight, in practice
and with actual distributions, according to Tables~\ref{tab:errors1} and ~\ref{tab:errors2}, the resulting error
in the algorithm was usually much better than the theoretical $\varepsilon$ bound.

We ran the exact algorithm, our approximation algorithm with $\varepsilon \in \{ 0.1, 0.01, 0.001\}$, and a simple simulation with 
$10^3$ to $10^7$ samples (number of samples is denoted by $s$ in the table), on networks from the DRC
implementation, sequence nodes with 10, 20, and 50 children (number of nodes denoted by $N$ in the table), and 20 Logistics domain plans, and
several values of $M$ (the notations $M, N$ are as in Theorem~\ref{th:TTalgcomplexity}). 
Results for the various task trees are shown in tables~\ref{tab:errors1},~\ref{tab:errors2} (error comparison) and~\ref{tab:runtimes1},~\ref{tab:runtimes2} (runtime comparison).
Errors are the maximum error in the CDF, measured from the true result when available, and from the bounds generated by the approximation algorithm using $\varepsilon = 0.0001$
when the exact algorithm timed out (over 2 hours). 
The exact algorithm times out in many cases when the number of tasks is 20 or more, except when size of the support $M$ is very small, in which case it handles some more nodes, but still cannot handle 
50 tasks even for $M=2$.
Both our approximation algorithm and the sampling algorithm handle all these cases, as our algorithm's runtime is polynomial in $N$, $M$, and ${1}/{\varepsilon}$
as is the sampling algorithm's (time linear in number of samples). 

The advantage of the approximation algorithm is mainly in providing bounds with
certainty as opposed to the bounds in-probability provided by sampling. 
Additionally, as predicted by theory, accuracy of the approximation algorithm
improves linearly with ${1}/{\varepsilon}$ (and almost linear in runtime), whereas accuracy of sampling improves only as a square root of the number of
samples. Thus, even in cases where sampling initially outperformed the approximation algorithm, increasing the required accuracy for both algorithms,
eventually the approximation algorithm overtook the sampling algorithm.

\begin{figure}[htb]
	\centering  
	
		\tikzset{
			basic/.style  = {draw, text width=5cm, font=\sffamily, rectangle},
			root/.style   = {basic, trapezium,trapezium left angle=70,trapezium right angle=-70, thin, align=center},
			level 2/.style = {basic, ,single arrow, thin, align=center,
				text width=7em},
			level 3/.style = {basic, thin, align=left,  text width=7em}
		}
		\begin{tikzpicture}
		 [
		level 1/.style={sibling distance=45mm},
		edge from parent/.style={->,draw},
		>=latex]
		
		\node[root] {(transport-two p1 p2)}
		child {node[level 2 ] (c1) {(transport p1)}}
		child {node[level 2] (c2) {(transport p2)}};
		
		\begin{scope}[every node/.style={level 3}]
		
		\node [single arrow, below of = c1, xshift=25pt] (c11) {(dispatch t1 l1)};
		\node [below of = c11, xshift=25pt] (c111) {(reserve t1)};
		\node [below of = c111] (c112) {(move t1 home l1
			)};
		
		\node [below of = c112, xshift=-25pt] (c12) {(load t1 p1)};
		\node [ below of = c12] (c13) {(move t1 l1 l3)};
		\node [single arrow,below of = c13] (c14) {(return t1 l1
			)};
		\node [below of = c14, xshift=25pt] (c141) {(free t1)};
		\node [below of = c141] (c142) {(move t1 l3 home
			)};

		\node [single arrow,below of = c2, xshift=25pt] (c21) {(dispatch t2 l2)};
		\node [below of = c21, xshift=25pt] (c211) {(reserve t2)};
		\node [below of = c211] (c212) {(move t2 home l2)};
		
		\node [below of = c212, xshift=-25pt] (c22) {(load t2 p2)};
		\node [below of = c22] (c23) {(move t2 l2 l3)};
		\node [single arrow,below of = c23] (c24) {(return t12 l2)};
		\node [below of = c24, xshift=25pt] (c241) {(free t2)};
		\node [below of = c241] (c242) {(move t2 l3 home)};
		
		\end{scope}
		
		\foreach \value in {1,...,4}
		\draw[->] (c1.195) |- (c1\value.west);
		\foreach \value in {1,2}
		\draw[->] (c11.195) |- (c11\value.west);
		\foreach \value in {1,2}
		\draw[->] (c14.195) |- (c14\value.west);

		\foreach \value in {1,...,4}
		\draw[->] (c2.195) |- (c2\value.west); 
		\foreach \value in {1,2}
		\draw[->] (c21.195) |- (c21\value.west);
		\foreach \value in {1,2}
		\draw[->] (c24.195) |- (c24\value.west);  
		
		\end{tikzpicture}

	\caption{A simple plan generated by JSHOP2 algorithm.
		\label{fig:transport}
	}
\end{figure}
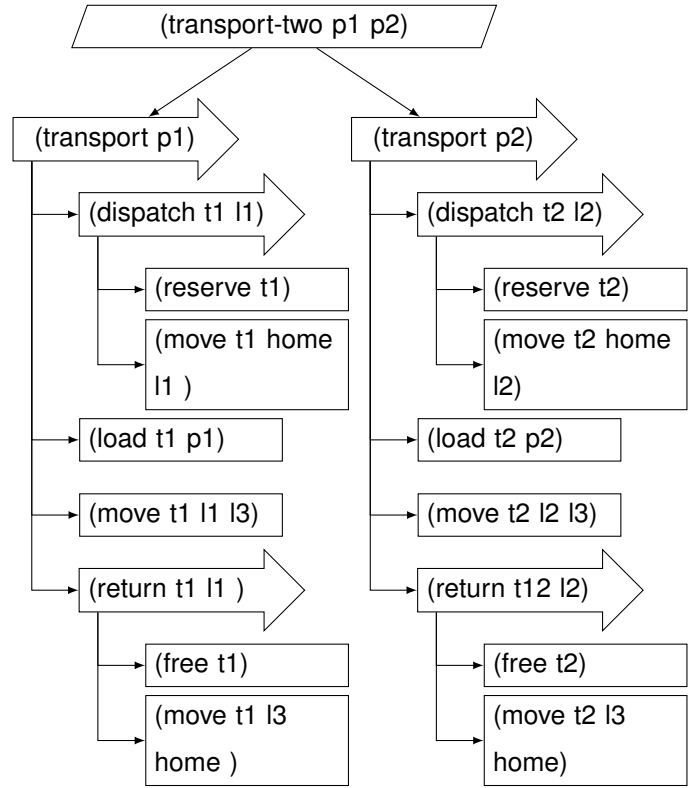

\begin{table*}[tbh!]
	{\footnotesize
		\begin{tabular}{|p{0.1cm}|p{0.1cm}|p{0.2cm}|| p{1.05cm}|p{1.05cm}|p{1.4cm}||X{0.6cm}|X{0.6cm}|X{0.6cm}|X{0.6cm}|X{0.6cm}|}
			\hline
			\multicolumn{1}{|c|}{\multirow{2}{*}{Task Tree}} & \multicolumn{1}{c|}{\multirow{2}{*}{$N$}} & \multicolumn{1}{c|}{\multirow{2}{*}{$M$}} & \multicolumn{3}{c|}{Approximation algorithm error, given $\varepsilon$} & \multicolumn{5}{c|}{Sample algorithm error, given \# samples} \\ \cline{4-11} 
			\multicolumn{1}{|c|}{} & \multicolumn{1}{c|}{} & \multicolumn{1}{c|}{} & \multicolumn{1}{c|}{0.1} & \multicolumn{1}{c|}{0.01} & \multicolumn{1}{c|}{0.001} & \multicolumn{1}{c|}{$10^{3}$} & \multicolumn{1}{c|}{$10^{4}$} & \multicolumn{1}{c|}{$10^{5}$} & \multicolumn{1}{c|}{$10^{6}$} & \multicolumn{1}{c|}{$10^{7}$} \\ \hline \hline
			\multirow{3}{*}{Drive} & 47 & 2 & [-0.0052, 0.0086] & [-0.0004, 0.0004] & [-$3.2 {\cdot} 10^{-5}$, $3.4 {\cdot} 10^{-5}$] & 0.0206 & 0.0072 & 0.0031 & 0.0009 & 0.0001 \\ \cline{2-11} 
			& 47 & 4 & [-0.0096, 0.019] & [-0.0009, 0.0013] & [-$9.2 {\cdot} 10^{-5}$, $1.3 {\cdot} 10^{-4}$] & 0.0476 & 0.0075 & 0.0046 & 0.0011 & 0.0001 \\\cline{2-11} 
			& 47 & 10 & [-0.014, 0.028] & [-0.0014, 0.0025] & [-$9.5 {\cdot} 10^{-5}$, $1.4 {\cdot} 10^{-4}$] & 0.0236 & 0.0083 & 0.0024 & 0.0015 & 0.0003 \\ \hline
			\multirow{2}{*}{Walk} & 57 & 2 & [-0.0039, 0.004] & [-0.0003, 0.0003] & [-$3.1 {\cdot} 10^{-5}$, $3.2 {\cdot} 10^{-5}$] & 0.0166 & 0.0067 & 0.002 & 0.0008 & 0.0003 \\ \cline{2-11} 
			& 57 & 4 & [-0.0038, 0.004] & [-0.0004,  0.0004] & [-$3.6 {\cdot} 10^{-5}$, $3.9 {\cdot} 10^{-5}$] & 0.0232 & 0.0125 & 0.0022 & 0.0014 &  0.0003 \\ \cline{2-11} 
			& 57 & 10 & [-0.0047, 0.0049] & [-0.0004, 0.0005] & [-$3.8 {\cdot} 10^{-5}$, $4 {\cdot} 10^{-5}$] & 0.0255 & 0.0117 & 0.0029 & 0.0011  & 0.0003 \\ \hline
			\multirow{2}{*}{Pick Up} & 18 & 10 & {[}-0.0041, 0.0061{]} & {[}-0.0003, 0.0005{]} & {[}-$3.5 {\cdot} 10^{-5}$, $5.8 {\cdot} 10^{-5}${]} & 0.018 & 0.0054 & 0.0027 & 0.0006 & 0.0002 \\ \cline{2-11}
			& 18 & 20 & [-0.0038, 0.0031] & [-0.0006, 0.0005] & [-$3 {\cdot} 10^{-5}$, $3.5 {\cdot} 10^{-5}$] & 0.027 & 0.0046 & 0.0015  & 0.0008 & 0.0002 \\ \hline
			\multirow{2}{*}{Logistics1} & 34 & 2 & [-0.0019, 0.0019] & 0& 0& 0.0168 & 0.007 & 0.001 & 0.0009 & 0.0002 \\ \cline{2-11}  
			& 34 & 4  & [-0.0068, 0.0068]  & [-0.0006, 0.0006] & [-$3.4{\cdot} 10^{-5}$, $3.8{\cdot} 10^{-5}$] & 0.025 & 0.0057 & 0.0032 & 0.0005 & 0.0003\\ \cline{2-11}  
			& 34 & 10  & [-0.008, 0.007] & [-0.0009, 0.0007] & 0 & 0.018 & 0.011 & 0.003 &0.0009 & 0.0004\\ \hline
			\multirow{2}{*}{Logistics2} & 45 & 2 & [-0.002, 0.002] & 0& 0 & 0.013 & 0.015 & 0.004& 0.001 & 0.0003 \\ \cline{2-11}  
			& 45 & 4  & [-0.004, 0.004] & [-0.0004, 0.0004] & [-$3.3{\cdot} 10^{-5}$, $3.4{\cdot} 10^{-5}$]& 0.036 & 0.008 & 0.002 & 0.0006 &0.0002\\ \cline{2-11}  
			& 45 & 10  &[-0.005, 0.006]  & [-0.0004, 0.0006] & 0 & 0.03 & 0.013 & 0.002 &0.001 & 0.0002 \\ \hline
		\end{tabular}
		\caption{Estimation errors}
		\label{tab:errors1}
	}
\end{table*}

\begin{table*}[htb!]
	{\footnotesize
		\begin{tabular}{|X{1.2cm}|l|l|l||l|l|l||X{0.65cm}|X{0.65cm}|X{0.65cm}|X{0.65cm}|X{0.8cm}|}
			\hline
			\multicolumn{1}{|c|}{\multirow{2}{*}{Task Tree}} & \multicolumn{1}{c|}{\multirow{2}{*}{$N$}} & \multicolumn{1}{c|}{\multirow{2}{*}{$M$}} & \multirow{2}{*}{Exact} & \multicolumn{3}{c|}{Approx. algorithm, with $\varepsilon$} & \multicolumn{5}{c|}{Sampling algorithm, with \# samples} \\ \cline{5-12} 
			\multicolumn{1}{|c|}{} & \multicolumn{1}{c|}{} & \multicolumn{1}{c|}{} &  & \multicolumn{1}{c|}{0.1} & \multicolumn{1}{c|}{0.01} & \multicolumn{1}{c|}{0.001} & \multicolumn{1}{c|}{$10^{3}$} & \multicolumn{1}{c|}{$10^{4}$} & \multicolumn{1}{c|}{$10^{5}$} & \multicolumn{1}{c|}{$10^{6}$} & \multicolumn{1}{c|}{$10^{7}$} \\ \hline \hline
			\multirow{2}{*}{Drive} & 47 & 2 & 1.49 & 0.141 & 1.14 & 1.49 & 0.187 & 1.92 & 19.11 & 190.4 & 1905 \\ \cline{2-12} 
			& 47 & 4 & 18.9 & 0.34 & 7.91 & 16.11 & 0.21 & 2.1 & 20.95 &  211.5 & 2113.6\\ \cline{2-12}
			& 47 & 10 & $> 2$h & 1.036 & 32.94 & 390.5 & 0.28 & 2.81 & 28.6 &279.1  &  2844.4  \\ \hline
			\multirow{2}{*}{Walk}  & 57 & 2& 4.46 & 0.33 & 3.1 & 4.03 & 0.205 & 2.06 & 20.86 & 208.1 & 2082.7  \\ \cline{2-12} 
			& 57 & 4  & 183.5 & 0.983 & 18.42 & 95.11 & 0.23 & 2.34 & 23.03 & 230.4 & 2352.4 \\ \cline{2-12} 
			& 57 & 10 & $> 2$h & 8.13 & 128.99 & 3668.2 & 0.293 & 2.92 & 29.16 & 291.3 & 2902.7 \\ \hline
			\multirow{2}{*}{Pick Up} & 18 & 10 & 5.76 & 0.022 & 0.193 & 1.133 & 0.103 & 0.983 & 9.8 & 101.9 & 1006.8 \\ \cline{2-12} 
			& 18 & 20 & 27.88 & 0.046 & 0.4 & 3.15 & 0.132 & 1.33 & 13.25 & 130.4 & 1305.9 \\ \hline
			\multirow{2}{*}{Logistics1} & 34 & 2 & 0.014 & 0.007 &  0.009 & 0.009 & 0.239 & 2.03 & 19.3 & 193.9 & 1767 \\ \cline{2-12} 
			& 34 & 4 & 22.98 & 0.048 & 1.3 & 13.1 & 0.2 &  2 & 20 & 205 &1928\\ \cline{2-12}  
			& 34 & 10 & $> 4$h & 0.25 & 8.26 & 475 & 0.26 & 2.64 & 26.4 & 267 & 2649 \\ \hline 
			\multirow{2}{*}{Logistics2} & 45 & 2 & 0.07 &  0.02 &  0.06 & 0.06 & 0.23 & 2.35 & 23.4 & 234.7 & 2196 \\ \cline{2-12} 
			& 45 & 4 & 373.3& 0.2 & 7 & 82.9 & 0.25 & 2.5 & 25.6 & 256 & 2393\\ \cline{2-12}  
			& 45 & 10 & $> 4$h & 2.19 & 120 & 6101 & 0.31 & 3.12 & 31.3 &314 & 3139\\ \hline
		\end{tabular}
		\caption{Runtime comparison (run times in seconds)}
		\label{tab:runtimes1}
	}
	
\end{table*}

\begin{table*}[tbh!]
	{\footnotesize
		\begin{tabular}{|l|l|l|| p{1.1cm}|p{1.1cm}|p{1.4cm}||p{0.9cm}|p{0.9cm}|p{0.9cm}|}
			\hline
			\multicolumn{1}{|c|}{\multirow{2}{*}{Task Tree}} & \multicolumn{1}{c|}{\multirow{2}{*}{$N$}} & \multicolumn{1}{c|}{\multirow{2}{*}{$M$}} & \multicolumn{3}{c|}{Approximation algo. error, given $\varepsilon$} & \multicolumn{3}{c|}{Sample algo. error, given \# samples} \\ \cline{4-9} 
			\multicolumn{1}{|c|}{} & \multicolumn{1}{c|}{} & \multicolumn{1}{c|}{} & \multicolumn{1}{c|}{0.1} & \multicolumn{1}{c|}{0.01} & \multicolumn{1}{c|}{0.001} & \multicolumn{1}{c|}{$10^{3}$} & \multicolumn{1}{c|}{$10^{4}$} & \multicolumn{1}{c|}{$10^{5}$} \\ \hline \hline
			\multirow{2}{*}{Seq 10} & 10 & 4 & [-0.027, 0.041] & [-0.0027, 0.0041] & [-$2.2 {\cdot} 10^{-4}$, $2.5 {\cdot} 10^{-4}$] & 0.0224 & 0.008 & 0.0017  \\ \cline{2-9} 
			& 10 & 10 &[-0.0316, 0.0615]  & [0.0033, 0.0067] & [$-2.6 {\cdot} 10^{-4}$, $5.2 {\cdot} 10^{-4}$] & 0.027 & 0.0117 & 0.0038  \\ \hline 
			\multirow{2}{*}{Seq 20} & 20 & 2 & [-0.02, 0.0373 ] & [-0.0015, 0.0026] & [-$1.6 {\cdot} 10^{-4}$, $2.6 {\cdot} 10^{-4}$]  & 0.0266 & 0.0077 &  0.003   \\ \cline{2-9} 
			& 20 & 4 & [-0.026, 0.025] & [-0.0025, 0.0025]  & [-$2.7{\cdot} 10^{-4}$, $2.3 {\cdot} 10^{-4}$] & 0.039 & 0.01 & 0.002 \\ \cline{2-9} 
			& 20 & 10   & [-0.027, 0.027] & [-0.0028, 0.0027] & [-$3 {\cdot} 10^{-4}$, $2.5 {\cdot} 10^{-4}$] & 0.032 & 0.007 & 0.0042  \\ \hline
			\multirow{2}{*}{Seq 50} & 50 & 2 & [-0.032, 0.032] & [-0.0028, 0.0028] & [-$2.8 {\cdot} 10^{-4}$, $2.4 {\cdot} 10^{-4}$]  & 0.0193 & 0.007 & 0.0024  \\ \cline{2-9}  
			& 50 & 4  & [-0.035, 0.035]  & [-0.0036, 0.0035] &[-$3.9 {\cdot} 10^{-4}$, $3.2 {\cdot} 10^{-4}$]  & 0.0236 & 0.0064 & 0.0023 \\ \cline{2-9}  
			& 50 & 10  & [-0.037, 0.037] & [-0.004, 0.0039] & [-$4.2 {\cdot} 10^{-4}$, $3.5 {\cdot} 10^{-4}$] & 0.017 & 0.007  & 0.005  \\ \hline
			\multirow{1}{*}{Rand50-AVG} & 50 & 4 & 0.007 & 0.0007 &  0 & 0.0243 & 0.0084 & 0.0024 \\   \hline
			
		\end{tabular}
		\caption{Estimation errors for sequential plans}
		\label{tab:errors2}
	}
\end{table*}

\begin{table*}[htb!]
	{\footnotesize
		\begin{tabular}{|l|l|l|l||l|l|l||p{0.9cm}|p{0.9cm}|p{0.9cm}|}
			\hline
			\multicolumn{1}{|c|}{\multirow{2}{*}{Task Tree}} & \multicolumn{1}{c|}{\multirow{2}{*}{$N$}} & \multicolumn{1}{c|}{\multirow{2}{*}{$M$}} & \multirow{2}{*}{Exact} & \multicolumn{3}{c|}{Approx. algorithm, with $\varepsilon$} & \multicolumn{3}{c|}{Sample algorithm, with \# samples} \\ \cline{5-10} 
			\multicolumn{1}{|c|}{} & \multicolumn{1}{c|}{} & \multicolumn{1}{c|}{} &  & \multicolumn{1}{c|}{0.1} & \multicolumn{1}{c|}{0.01} & \multicolumn{1}{c|}{0.001} & \multicolumn{1}{c|}{$10^{3}$} & \multicolumn{1}{c|}{$10^{4}$} & \multicolumn{1}{c|}{$10^{5}$}  \\ \hline \hline
			\multirow{2}{*}{Seq 10} & 10 & 4 & 0.23 & 0.003 & 0.02 & 0.148  & 0.054 & 0.545 &5.336  \\ \cline{2-10} 
			& 10 & 10 & 10.22 & 0.008 & 0.073 & 0.692 & 0.071 & 0.724 & 7.18  \\ \hline
			\multirow{2}{*}{Seq 20} & 20 & 2 & 0.23 & 0.003 & 0.02 & 0.285  & 0.054 & 0.545 & 9.62  \\ \cline{2-10} 
			& 20 & 4 & $> 2$h & 0.011 & 0.106 & 1.208 & 0.105 & 1.066 & 10.74 \\ \cline{2-10} 
			& 20 & 10 & $> 2$h & 0.035 & 0.331 & 4.67 & 0.145 & 1.473 & 14.38  \\ \hline
			\multirow{2}{*}{Seq 50} & 50 & 2 & $> 2$h & 0.028 &  0.28 & 3.593  & 0.236 & 2.366 & 24.71  \\ \cline{2-10} 
			& 50 & 4 & $> 2$h & 0.079 & 0.81 & 11.145 & 0.265 & 2.68 & 26.84\\ \cline{2-10}  
			& 50 & 10 & $> 2$h & 0.227 & 3.1 & 38.01 & 0.354 & 3.63 & 35.63  \\ \hline
			\multirow{1}{*}{Rand50-AVG} & 50 & 4 & $>2$h & 1.1544 & 19.77 & 390.58 & 5.676 & 55.021 & 590.17 \\   \hline
		\end{tabular}
		\caption{Runtime comparison (run times in seconds) for sequential plans}
		\label{tab:runtimes2}
	}
\end{table*}


\section{Dependencies and other generalizations}\label{sec:generalizations}

Computing the distribution of the makespan in trees is considered a trivial problem in some contexts
in probabilistic reasoning~\cite{Pearl}.
Specifically, given the task network, such as the one in Figure~\ref{fig:task-network},
it is straightforward to represent the 
distribution using a Bayes network (BN) that has one node
per task where the {\em children} of a node $v$ in the 
task network are represented by BN nodes that are {\em parents} of
the BN node representing $v$. This results in a tree-shaped BN, 
where it is well known that probabilistic reasoning can be done in time linear
in the number of nodes, e.g., by belief propagation (message passing)~\cite{Pearl,Kim}. 
However, there is a difficulty, usually ignored in the UAI literature, in the potentially exponential size variable domains, which our algorithm, essentially a limited form
of approximate belief propagation from primitive task variables to the root, avoids by trimming.

\begin{figure}
\centering
\begin{tikzpicture}[
    level/.style={sibling distance=30mm/#1},
    edge from parent/.style = {draw, -latex},
    sloped,
    every node/.style = {trapezium, trapezium left angle=60, trapezium right angle=-60, draw, align=center, top color=white, bottom color=blue!20,minimum height=0.5cm}
]
   \node {}  
   child { node [single arrow] {} 
        child{ node [circle] {$A$}}
        child{ node [circle] {$B$}}
    }    
    child  { node [single arrow] {} 
        child{ node [circle] {$C$}}
        child{ node [circle] {$D$}}
    }
    ;
\end{tikzpicture}
\caption{A simple task network.}
\label{fig:task-network}
\end{figure}

Looking at makespan distribution computation as probabilistic reasoning leads immediately to 
the question on how to handle task completion times that have dependencies, represented as a BN. 
Since reasoning in BNs is NP-hard even for binary-valued variables~\cite{Dagum.aij,Cooper.ai}, 
this is hard in general.
But for cases where the BN toplogy is tractable, such as for BNs 
with a small cutset,
BNs with bounded treewidth~\cite{Bodlaender:2006:TCA:2092758.2092759},
or directed-path singly connected BNs~\cite{ShimonyDomshlak.aiRN2003},
a deterministic polynomial-time approximation scheme for 
the makespan distribution may be achievable.

\begin{figure}
\centering
\begin{tikzpicture}[
    level/.style={sibling distance=20mm/#1},
    edge from parent/.style = {draw, -latex},
    sloped,
    grow=right,
    every node/.style = {circle, draw, align=center, top color=white, bottom color=blue!20}
]
   \node (S) {}  
        child{ node (C)  {$C$}
        	child{ node (D)  {$D$}  }
        }
        child{ node (A)  {$A$}
        	child{ node (B)  {$B$} }
        }

    ;
    
    \node (T) [right=4cm] {};
    
    \path [draw, -latex] (D) -- (T);
    \path [draw, -latex] (B) -- (T);
    \path [draw, -latex, dashed] (A) -- (D);
\end{tikzpicture}
\caption{A petri-net graph.}
\label{fig:petri-net-graph}
\end{figure}

\begin{figure}
\centering
\begin{tikzpicture}[
    level/.style={sibling distance=30mm/#1},
    edge from parent/.style = {draw, -latex},
    sloped,
    every node/.style = {trapezium, trapezium left angle=60, trapezium right angle=-60, draw, align=center, top color=white, bottom color=blue!20,minimum height=0.5cm}
]
   \node {}  
   child { node [single arrow] {} 
        child{ node (A) [circle] {$A$}}
        child{ node [circle] {$B$}}
    }    
    child  { node [single arrow] {} 
        child{  node (X2) {}         
        	child {node [circle] {$C$}}
        }
        child{ node [circle] {$D$}}
    }
    ;
    
    \path [draw, -latex] (X2) [bend left] edge (A);
\end{tikzpicture}
\caption{A non-hierarchical task network}
\label{fig:non-tree-TN}
\end{figure}

\begin{figure}
\centering
\begin{tikzpicture}[
    level/.style={sibling distance=30mm/#1},
    edge from parent/.style = {draw, -latex},
    sloped,
    every node/.style = {trapezium, trapezium left angle=60, trapezium right angle=-60, draw, align=center, top color=white, bottom color=blue!20,minimum height=0.5cm}
]
   \node {}  
   child { node [single arrow] {} 
        child{ node (A) [circle] {$A$}}
        child{ node [circle] {$B$}}
    }    
    child  { node [single arrow] {} 
        child{  node (X2) {}         
             child {node [circle] {$A'$}}
        	child {node [circle] {$C$}}
        }
        child{ node [circle] {$D$}}
    }
    ;

\end{tikzpicture}
\caption{Representing ``shared'' tasks  using correlated random variables}
\label{fig:task-network-with-copy}
\end{figure}

Here we motivate and handle a special case of small cutsets. 
Specifically, suppose that, in addition to the tree, we allow a small number of dependencies
between primitive task distributions. Does our algorithm generalize to this case?
The importance of this question is because such an extension is natural in some contexts. For example,
in the logistics case we have a 2-level tree, with a toplogy similar to that of Figure ~\ref{fig:task-network}.
The children of the sequence nodes are primitive tasks such as
``drive delivery truck 1 from Boston to NY'' (suppose this is primitive task A in  Figure ~\ref{fig:task-network}).
Now, the duration of this action is a random variable
that depends on the state of traffic at the time the action is taken. Suppose that another
primitive action (e.g. primitive task C) is ``drive delivery truck 2 from Boston to NY'' which is to occur roughly at the
same time as the first action. Since traffic conditions are likely to be very similar, the
duration of the actions may be correlated, and we need to be able to take
this dependency into account.

Another case where we have dependency is when the same primitive action is 
used in more than one composite task. Although this state cannot be represented in
strict hierarchies, recall that timing relationships represented by HTNs can also be
represented by directed acyclic perti nets. For example, the task network of Figure~\ref{fig:task-network}
can be represented by the perti net of Figure~\ref{fig:petri-net-graph} (without the shaded arc). However, perti nets allow more general
timing constraints: the language of trees is equlivalent to perti nets with a series-parallel
graph structure. Adding the shaded arc from A to D in the petri net of Figure~\ref{fig:petri-net-graph}, 
we get a graph that is not series-parallel. Its equivalent in HTNs would be the non-tree structure
shown in Figure~\ref{fig:non-tree-TN}, that shares primitive task A between composite tasks. 
The latter could be converted into a pure tree-shape by adding a task A' that mirrors task A,
i.e. has a duration exactly equal to that of A (Figure~\ref{fig:task-network-with-copy}).

This case, as well as generalizations thereof where the number of correlated variables is small,
we can handle by a scheme known as {\em conditioning}, adapted to our approximation scheme.
For example, in cutset conditioning, a separate reasoning problem is generated
for every possible value instantiation over all the cutset variables. The results
are combined by weighted averaging. We propose to do the same in our case, but must prove
that the approximation quality is maintained, as we indeed do below.

We thus assume that all primitive task durations are independent, when conditioned on
a small cutset $Y$ of the primitive task durations. The joint duration distribution over $Y$
can be provided by a BN, or a complete table, or any other representation. We assume
that the cardinality of the set $Y$ is sufficiently small that the joint domain
size $m^{|Y|}$ is managable, in terms of memory and computation time if we have to
iterate over all domain values. We are also given the duration distribution for any
other task, given every possible value assignment $y$ to the variables in $Y$.
Together, this information fully defines the joint probability of all the 
primitive task durations. 

For example, in the case of Figure~\ref{fig:task-network-with-copy}, we can set $Y=\{ A\}$, so $Y$ is a singleton set.
This is a somewhat degenerate example case, as
the joint distribution can be represented trivially using $P(A'=a'|A=a)=1$,
as  all other primitive task duration variables are independent of $A$.

Let $X$ be the random variable denoting the makespan distribution of the
root of the task network. We wish to estimate $F_X$, the cumulative distribution of $X$.
Our approximation algorithms for trees without dependency can estimate an
upper and lower bounds approximations. With dependencies, we cannot do so directly.
However, consider an assignment $Y=y$, for some value $y\in D(Y)$. We are given the conditional
distribution $Z|Y=y$ for all the rest of the primitive tasks, which are now
independent given $Y=y$. Consider the distribution:
\[
F_{X|Y=y}(x) = P(X\leq x| Y=y)
\]
For each value $Y=y$ we can run the approximation algorithm, to get upper 
Kolmogorov bound $F_{X^+|Y=y}(x)$ and  lower Kolmogorov bound $F_{X^-|Y=y}(x)$. 
Due to Theorem~\ref{th:TNapprox}, we have
the following property, for all $x$:
\[
F_{X^+|Y=y}(x) \succeq_\varepsilon F_{X|Y=y}(x) \succeq_\varepsilon F_{X^-|Y=y}(x)
\]
Now let:
\[
F_{X^-}(x) = \sum_{y\in D(Y)} P_Y(y) F_{X^-|Y=y}(x)
\]
and likewise:
\[
F_{X^+}(x) = \sum_{y\in D(Y)} P_Y(y) F_{X^+|Y=y}(x)
\]

\begin{theorem}
Computing $F_{X^-}(x)$ and $F_{X^+}(x)$ takes time $O(m^{|Y|})t$, where $t$ is the runtime
of our tree task network algorithm. The resulting approximation obeys, for all $x$:
\[
F_{X^+}(x) \succeq_\varepsilon F_{X}(x) \succeq_\varepsilon F_{X^-}(x)
\]
\end{theorem}

\begin{proof}
The runtime bound is obvious, as a trivial implementation simply takes $m^{|Y|}$
runs of the tree task network algorithm. The approximation bounds follow
from the bounds for individual values $Y=y$, and from a convexity argument.
For example, by construction, we have:
\[
F_{X^-}(x) -  F_{X}(x) =  \sum_{y\in D(Y)} P_{Y}(y)(F_{X^-|Y=y}(x) - F_{X|Y=y}(x))
\]
The right hand side is a convex sum of quantities that are all between 0 and $\varepsilon$, which therefore
must also be between 0 and $\varepsilon$.
\end{proof}

\section{Discussion}\label{sec:discussion}

We proposed an operator for trimming the support of random variables such that the resulting trimmed variable is an approximation of the variable that has a bigger support. As the motivation in this paper was to estimate the probabilities of meeting deadlines in hierarchical plans, the notion of approximation used is a one-sided version of the Kolmogorov metric, that reflects the fact that in such estimations we allow over-, not under- approximations. The core of the paper is devoted to an analysis of the prorogation of the estimation errors in the computation of the random variable that represents the makespan of a hierarchical plan. Based on this analysis, the paper proposes recursive algorithms that can compute an approximation of this makespan in time and memory that are polynomial in the sizes of the supports of the primitive tasks, the size of the tree, and of the inverse of the required accuracy ($1/\varepsilon$).

In the following paragraphs we discuss directions for future research and ideas for possible technical improvements of the proposed techniques. Some of these improvements are easy to implement and the reason for not including them in the first place was for clarity of the presentation, other require future research.

\paragraph{Avoid trimming variables with a small support} In the proposed algorithm, for ease of analysis and because we wanted to keep the code simple, we trimmed all the input and intermediate variables, whatever the size of their support is. This may be required, in a worst case, so doing so does not affect the complexity results, but it may give inferior run time and memory performance in the average case. We therefore, recommend to only trim variables that are small. This can be added as an initial test inside the Trim procedure.

\paragraph{Add a trim after a parallel node} Another point is that in the combined algorithm,  space and time complexity can be reduced by adding some $\Trim$ operations,
especially after processing a parallel node, which is not done in our version. This may reduce accuracy,  a trade-off yet to be examined.

\paragraph{Focused trimming} Another option is, when given a specific threshold, trying for higher accuracy in just the region of the threshold, but how to do that is non-trivial. For {\em sampling} schemes such methods are known, including adaptive sampling~\cite{bucher1988adaptive,lipton1990practical}, stratified sampling, and other schemes. It may be possible to apply such schemes to deterministic algorithms as well - an interesting issue for future work. 

\paragraph{Extension to continuous distributions} Our algorithm can handle them by
pre-running a version of the $\Trim$ operator on the primitive task distribution. Since one cannot iterate over support values
in a continuous distribution, start with the smallest support value (even if it is $- \infty$), and find the value at which the CDF
increases by $\varepsilon$. This requires access to the inverse of the CDF, which is available, either exactly or approximately,
for many types of distributions. 

\paragraph{Approximating expectations} We showed that the expectation problem is also NP-hard. A natural question is on approximation algorithms for the expectation problem, but the answer here is not so obvious. Sampling algorithms may run into trouble if the target distribution contains major outliers, i.e. values very far from other values but with extremely low probability. Our approximation algorithm can also be used as-is to estimate the CDF and then to approximate the expectation, but we do not expect it to perform well because our current $\Trim$ operator only limits the amount of probability mass moved at each location to $\varepsilon$, but does not limit the ``distance'' alnog the $x$ parameter over which it is moved. The latter may be arbitrarily bad for estimating the expectation. Nevertheless, a different version of $\Trim$ that bounds just this distance was shown to provide a polynomial-time approximation scheme for the expectations in EXPECTI-MIN-MAX game trees~\cite{shperbergmonte},  if the utilities are bounded. Since the expectation operator involves convolution, these results should be applicable (with some adjustment) to task networks as well.

\paragraph{Optimal trimming} While we proved that the trimming procedure proposed in this paper allows for approximation that improves polynomially with the time and memory invested. It is interesting to look for optimal approximations. In~\cite{cohen2018}, we showed that an optimal approximation of a single random variable can be obtained in polynomial time. Specifically, we showed that given a random variable $X$ and a target support size $m$, we can find the minimal $\varepsilon^*$ and a variable $X'$ such that $X'$ has support of size $m$ and $X \prec_{\varepsilon^*} X'$. Note that this does not directly give an optimal approximation of the makespan of a complete plan.

\paragraph{Compact representations of the random variables} One can view the work presented in this paper in the context of function approximation. In general, a function approximation problem is about the selection a function among a well-defined class that approximates a target function in a certain way. In our case, we approximate the CDF of a random variable with a piecewise constant function with a small number of pieces. As in other applications of function approximation, it is natural to ask whether more compact representations of the random variable exist. For example one can represent functions in a compressed from where a repeated entry can be specified once with a number that specifies the number of repetitions. Another approach would be to approximate using, e.g., splines instead of constant lines. The challenge will be, in any of these variants, to work directly on the compressed representation, as we do in this paper.

\paragraph{Split weights} In the proposed $\Trim$ algorithm, the inner loop goes until $p+prob(d) \leq \varepsilon$ and, when this condition is not met, the value of $prod(d)$ is left for the next iteration. A possible improvement, not included in the base version for simplicity, is to add to $p$ the part of $prob(d)$ up to $\varepsilon$ (i.e., have $p=\varepsilon$) and leave only the remaining part of $prob(d)$ to the next iteration.

\section{Related work}

We outline previous work on HTN planning, series-parallel networks, scheduling with uncertain task durations, the sum and the maximum of random variables, and approximation schemes.

\paragraph{HTN (Hierarchical Task Network)} Some task network models include, beyond the nodes that we handled in this paper, constraints on the tasks that restrict how some of the variables can be bound and the order in which parallel tasks are to be performed~\cite{erol1994htn,erol1996complexity, Russell:2003:AIM:773294}. In~\cite{erol1994htn} Erol et al., formally define, analyze and explicate features of the design of HTN planning systems. Specifically, how is the complexity of HTN planning varies with various conditions on the task networks. Our construction, at moment, supports only the basic structure. Methods for solving HTN are suggested as an online planning~\cite{nau1998control, nau2003shop2, gabaldon2002programming} and as offline planning~\cite{kelly2008offline}. In the experiments we conducted, we used hierarchical plans obtained by SHOP2~\cite{nau2003shop2} planner in the ``Logistics" domain from IPC2 (\href{url}{http://ipc.icaps-conference.org/}). The SHOP2 (Simple Hierarchical Ordered Planner 2) is a domain-independent planning system based on Hierarchical Task Network (HTN) planning. There are other HTN planners like TLPLan or TALPlanner ~\cite{kvarnstrom2000talplanner} but we chose, for convenience, to use JSHOP2, the Java version of SHOP2.

\paragraph{Series-parallel networks} There has been much work on series-parallel networks, although not all related to planning or AI. In~\cite{gelenbe1989multiprocessor}, Gelenbe discusses the fundamental issues involved in the performance of parallel computers. We believe that our work can be applied also in this context. Specifically, in Chapter 5 of this book Gelenbe proposes a model for series-parallel processing structures. Programs in this model are composed of (primitive) tasks; some of them are to be performed in series, others may be performed in parallel. Given the execution time distribution of each task (assuming i.i.d) and the characteristic parameters of the branching process, a method for computing numerically the execution time distribution of the program is shown. The computation involves numerical solutions based on solving a differential-integral equation and iterative methods. In~\cite{gutjahr1992average}, which is based on the same model, a bound on the average total execution time of a series–parallel processing structure is presented. Both papers are very relevant to our work and contributed as case studies (not reported directly in this paper). Moreover, this type of work supplies another motivation for the work shown in this paper. Temporal planing and in particular TPNs (temporal plan network) are presented in~\cite{kim2001executing}, the model is similar to ours, but the focus is on lower/upper bounds, rather than probability distributions. Hierarchical constraint-based plans in MAPGEN~\cite{MAPGEN} allow for more
general dependencies than series-parallel, providing additional expressive power but making the deadline problem even harder.

\paragraph{ Scheduling under uncertainty} Scheduling and in particular, scheduling under uncertainty, can provide additional motivation to our work. In~\cite{herroelen2005project} Herroelen and Leus review approaches for scheduling under uncertainty such as reactive scheduling and stochastic project scheduling and discuss the potentials of these approaches for scheduling under uncertainty of projects (tasks) with deterministic network evolution structure. Another relevant paper is~\cite{hagstrom1988computational} which provides computational complexity results for two PERT problems. Here a project is specified by precedence relations among tasks and task durations specified as discrete independent random variables. Three results are obtained: computing a value of the cumulative distribution function of project duration is \#P-complete, computing the mean of the distribution is at least as hard, and neither of the problems can be
computed in time polynomial in the number of points in the range of the project duration. This paper deals with a more general problem than ours, and with a different type of complexity. Our results are orthogonal, because we show a source of complexity that is not in the graph structure but in the distributions themselves. In fact, the 2-state problem shown to be hard for general graphs by Hagstrom, can be solved in polynomial time for series-parallel trees by dynamic programming. Another relevant paper is~\cite{beck2007proactive} which allows to represent each activity by an independent random variable with a known mean and variance. The best solutions are ones which have a high probability of achieving a good makespan, and methods for combining Monte Carlo simulation with deterministic scheduling algorithms are shown. Compared to our results, the bounds given in~\cite{beck2007proactive} are all in terms of the probability of errors while we bound the errors absolutely. In~\cite{fu2010towards} RCPSP/max (Resource Constrained Project Scheduling Problems with minimum and maximum time lags) is studied, where the durations of activities are not known with certainty. Similarly, in~\cite{bonfietti2014disregarding} a simulation approach is used to evaluate the expected makespan of a number of Partial Order Schedules (POS). The evaluation in the paper shows correlation between the expected makespan and the makespan obtained by simply fixing all durations to their average. The authors of this paper claim that the correlation that they report on allows to use averages instead of the random variables and yet obtain a very accurate estimation of the expected makespan. This, of course, is due to the linearity of expectation, that does not carry when working with other operators such as maximum. Another disadvantage of using averages or sampling is that they cannot give formal guarantees needed, e.g, in Service-Level-Agreements (SLA) where guarantees of the form: ``response time is less than 1mSec in at least 95\% of the cases'' are common~\cite{buyya2011sla}.
A few aspects distinguish our work from the presented scheduling papers. First, both~\cite{fu2010towards} and~\cite{bonfietti2014disregarding} provide guarantees only in probability. These are good for application where such guarantees are sufficient while we provide stronger guarantees. Second, our work is on approximating CDF, i.e., the probability of missing deadlines. Clearly, unlike expectations, this cannot always be directly derived from averages and variances as in~\cite{bonfietti2014disregarding} or by sampling as in~\cite{beck2007proactive}. 
For example, consider the following two tasks. Task A whose duration is 10 seconds with probability 0.999 or 20 seconds with probability 0.001. And task B whose duration is 10.02 seconds with probability 0.999 or 0.02 seconds with probability 0.001. Clearly, these two random variables have the same expectation (10.01) and even the same variance (0.01998). Of course, the averages of samples of these variables should be close to their expectations. But, if we take, say, ten tasks of A in sequence, we get a probability close to zero of crossing a deadline of 100.1 and if we take ten tasks of B in sequence, we get that the probability to cross this deadline is almost 1.

\paragraph{Sum and Max of random variables} Various works exist in the research literature regarding the sum and the maximum of random variables. For example, Evans and Leemis~\cite{evans2004algorithms} present algorithms for computing the exact probability density function of the sum of two independent discrete random variables and show an implementation of the algorithm in a computer algebra system. This paper does not examine the case of summing more than two discrete random variables which is our main challenge.  However, there are other papers that handle the case of summing more than two random variables like~\cite{hong2013computing} which examines the sum of Bernoulli distributed random variables. They present a simple derivation for an exact formula with a closed-form expression for the CDF of the Poisson binomial distribution. The derivation uses the discrete Fourier transform of the characteristic function of the distribution. Numerical studies were conducted to study the accuracy of the developed algorithm and approximation methods.
For example Bromiley~\cite{bromiley2013products} examines the sum of normally distributed random variables. The fact that the product and the convolution of Gaussian probability density functions (PDFs) are also Gaussian functions, is well known. Bromiley provides proofs that the following cases are also Gaussian functions: the product of two univariate Gaussian PDFs, the product of an arbitrary number of univariate Gaussian PDFs, the product of an arbitrary number of multivariate Gaussian PDFs, and the convolution of two univariate Gaussian PDFs. Note that this list does not include the maximum of two variables. In fact, we are not aware of any representation of random variables, except for the implicit PMF table used in this paper, that is closed under addition and under maximum. Mercier~\cite{mercier2007discrete} proposes algorithms for computing bounds of cumulative density functions of sums of i.i.d. non-negative random variables, renewal functions and cumulative density functions of geometric sums of i.i.d. non-negative random variables. Our work allows for non identical variables. Distribution of the maximum of random variables are discussed in~\cite{devroye1980generating}, with a focus mostly on continuous distributions. All the above papers treat either the sum of random variables or the maximum of random variables but not both together. 

\paragraph{Approximation algorithms} Our work also relates to FPTAS (fully polynomial-time approximation schemes)~\cite{arora2009computational} approximation algorithms for the, so called, knapsack problem~\cite{lai2006knapsack, lawler1979fast,gopalan2011fptas}. The idea of  FPTAS for knapsack is to scale the profits downwards enough so the profits of all the objects are polynomially bounded in $n$ and then to use dynamic programming on the new instance. By scaling with respect to a desired $\varepsilon$, the solution is at least $(1-\varepsilon)\cdot OPT$ where $OPT$ is the optimal solution, in polynomial time with respect to both $n$ and $1/\varepsilon$. Our binning technique is similar. Another algorithm that uses a similar binning technique, for a variant of the, so called, subsetsum problem, is described in~\cite{Cormen:2001:IA:580470}. This subsetsum variant is a decision problem, given a set of $n$ numbers $x_1, \cdots x_n$, a target $t$ and $0\leq\varepsilon\leq 1$, return yes, if there is a subset that sums between $(1 - \varepsilon)t$ and $t$. In this context, FPTAS ``trimming" is to remove values that are close to each other. In other words, if two values $s_1$ and $s_2$ that represent a sum of subset of $S$ are close to each other, then for the purpose of finding an approximate solution there is no reason to maintain both of them, so it is possible to merge them to be represented by $s_1$ or $s_2$ and delete the other. The idea in our approximation algorithm is similar to this. In~\cite{arora2009computational}, Chapter 9  ``complexity of counting", the complexity class \#P is presented and examples for counting problems are given, e.g., \#knapsack problem (defined in Chapter \ref{sec:complexity}). Another type of approximation is FPRAS, studied e.g., in~\cite{dyer2003approximate} which presents an algorithm that uses dynamic programming to provide a deterministic relative approximation and then sampling techniques to give arbitrary approximation ratios  and ~\cite{morris2004random}  that uses Markov chain Monte Carlo technique. The research literature also contains numerous {\em randomized} approximation schemes that handle dependencies~\cite{Pearl,Yuan20061189}, especially for
the case with {\em no evidence}. In fact, our implementation of the sampling scheme in ROBIL handled dependent durations.
It is unclear whether such sampling schemes can be adapted to handle dependencies {\em and} arbitrary evidence, such as: ``the completion time of
compound task $X$ in the network is known to be exactly one hour from now''. Another type of approximation algorithms uses Monte-Carlo technique as presented, e.g., in~\cite{bucher1988adaptive,lipton1990practical}. 

\paragraph{Acknowledgments.} This research was supported by the ROBIL project, and by the Lynne and William Frankel Center for Computer Science at Ben-Gurion University.

\section*{Refferences}
\bibliography{ijcai15}

\end{document}